\newtheorem{theorem}{Theorem}
\newtheorem{remark}{Remark}
\newtheorem{lemma}{Lemma}
\newtheorem{corollary}{Corollary}
\newtheorem{proof}{Proof}[section]
\title{Thompson Sampling for Combinatorial Semi-bandits with Sleeping Arms and Long-Term Fairness Constraints}
\author{
Zhiming Huang$^1$\footnote{Contact Author}\and
Yifan Xu$^2$\and
Bingshan Hu$^{1}$\and
Qipeng Wang$^3$\And
Jianping Pan$^1$\\
\affiliations
$^1$Department of Computer Science, University of Victoria, Canada\\
$^2$School of Cyber Science and Engineering, Southeast University, China\\
$^3$School of Electronic and Information Engineering, Harbin Institute of Technology, China\\
}
\begin{document}

\maketitle
\begin{abstract}
We study the combinatorial sleeping multi-armed semi-bandit problem with long-term fairness constraints~(CSMAB-F).
To address the problem, we adopt Thompson Sampling~(TS) to maximize the total rewards and use virtual queue techniques to handle the fairness constraints, and design an algorithm called \emph{TS with beta priors and Bernoulli likelihoods for CSMAB-F~(TSCSF-B)}.
Further, we prove TSCSF-B can satisfy the fairness constraints, and the time-averaged regret is upper bounded by $\frac{N}{2\eta} + O\left(\frac{\sqrt{mNT\ln T}}{T}\right)$, where $N$ is the total number of arms, $m$ is the maximum number of arms that can be pulled simultaneously in each round~(the cardinality constraint) and $\eta$ is the parameter trading off fairness for rewards. By relaxing the fairness constraints (i.e., let $\eta \rightarrow \infty$), the bound boils down to the first problem-independent bound of TS algorithms for combinatorial sleeping multi-armed semi-bandit problems.
Finally, we perform numerical experiments and use a high-rating movie recommendation application to show the effectiveness and efficiency of the proposed algorithm.
\end{abstract}
\section{Introduction}

In this paper, we focus on a recent variant of \emph{multi-armed bandit~(MAB)} problems, which is the combinatorial sleeping MAB with long-term fairness constraints (CSMAB-F)~\cite{li2019combinatorial}. In CSMAB-F, a learning agent needs to simultaneously pull a subset of available arms subject to some constraints~(usually the cardinality constraint) and only observes the reward of each pulled arm~(semi-bandit setting) in each round. Both the availability and the reward of each arm are stochastically generated, and the long-term fairness among arms is further considered, i.e., each arm should be pulled at least a number of times in a long horizon of time. The objective is to accumulate as many rewards as possible in the finite time horizon. 
The CSMAB-F problem has a wide range of real-world applications. For example, in task assignment problems, we want each worker to be assigned for a certain number of tasks~(i.e., fairness constraints), while some of the workers may be unavailable in some time slots~(i.e., sleeping arms). In movie recommendation systems considering movie diversity, different movie genres should be recommended for a certain number of times~(i.e., fairness constraints), while we do not recommend users with genres they dislike~(i.e., sleeping arms).

\emph{Upper Confidence Bound (UCB)} and \emph{Thompson Sampling (TS)} are two well-known families of algorithms to address the stochastic MAB problems. Theoretically, TS is comparable to UCB~\cite{hu2019problem,Agrawal2017}, but practically, TS usually outperforms UCB-based algorithms significantly~\cite{chapelle2011empirical}. However, while the theoretical performance of UCB-based algorithms has been extensively studied for various MAB problems~\cite{bubeck2012regret}, there are only a few theoretical results for TS-based algorithms~\cite{Agrawal2017,Chatterjee2017AnalysisOT,wang2018thompson}.

In ~\cite{li2019combinatorial}, a UCB-based algorithm called \emph{Learning with Fairness Guarantee~(LFG)} was devised and a problem-independent regret bound \footnote{If a regret bound depends on a specific problem instance, we call it a \emph{problem-dependent} regret bound while if a regret bound does not depend on any problem instances, we call it a \emph{problem-independent} regret bound.} $\frac{N}{2\eta} + \frac{2\sqrt{6mNT\ln T}+ 5.11w_{\max}N }{T}$ was derived for the CSMAB-F problem, where $N$ is the number of arms, $m$ is the maximal number of arms that can be pulled simultaneously in each round, $w_{\max}$ is the maximum arm weight,  and  $\eta$ is a parameter that balances the the fairness and the reward. However, as TS-based algorithms are usually comparable to UCB theoretically but practically perform better than UCB, we are motivated to devise TS-based algorithms and derive regret bounds of such algorithms for the CSMAB-F problem.
The contributions of this paper can be summarized as follows. 
\begin{itemize}[leftmargin=*]
    \item We devise the first TS-based algorithm for CSMAB-F problems with a provable upper regret bound. To be fully comparable with LFG, we incorporate the virtual queue techniques defined in~\cite{li2019combinatorial} but make a modification on the queue evolution process to reduce the accumulated rounding errors.
    \item Our regret bound~$\frac{N}{2\eta} + \frac{4\sqrt{mNT\ln T}+ 2.51w_{\max}N }{T}$ is in the same polynomial order as the one achieved by LFG, but with lower coefficients. This fact shows again that TS-based algorithms can achieve comparable theoretical guarantee as UCB-based algorithms but with a tighter bound.
    
    
    \item We verify and validate the practical performance of our proposed algorithms by numerical experiments and real-world applications. Compared with LFG, it is shown that TSCSF-B does perform better than LFG in practice.
\end{itemize}
  It is noteworthy that our algorithmic framework and proof techniques are extensible to other MAB problems with other fairness definitions. Furthermore, if we do not consider the fairness constraints, our bound boils down to the first problem-independent upper regret bounds of TS algorithms for CSMAB problems and matches the lower regret bound~\cite{bubeck2012regret}.



The remainder of this paper is organized as follows. In Section~\ref{sec:related}, we summarize the most related works about CSMAB-F. The problem formulation of CSMAB-F is presented in Section~\ref{sec:problem}, following what in \cite{li2019combinatorial} for comparison purposes. The proposed TS-based algorithm is presented in Section~\ref{sec:algorithms}, with main results, i.e., the fairness guarantee, performance bounds and proof sketches, presented in Section~\ref{sec:results}. 
Performance evaluations are presented in Section~\ref{sec:evaluation}, followed by concluding remarks and future work in Section~\ref{sec:conclusion}. Detailed proofs can be found in Appendix \ref{sec:appendix}.
\section{Related Works}\label{sec:related}

Many variants of the stochastic MAB problems have been proposed and the corresponding regret bounds have been derived. The ones that are most related to our work are \emph{Combinatorial MAB (CMAB)} and its variants, which was first proposed and analyzed in \cite{gai2012combinatorial}. In CMAB, an agent needs to pull a combination of arms simultaneously from a fixed arm set. Considering a semi-bandit feedback setting, i.e., the individual reward of each arm in the played combinatorial action can be observed, the authors of \cite{chen2013combinatorial} derived a sublinear problem-dependent upper regret bound based on a UCB algorithm and this bound was further improved in \cite{kveton2015tight}. In \cite{combes2015combinatorial}, a problem-dependent lower regret bound was derived by constructing some problem instances. Very recently, the authors of \cite{wang2018thompson} derived a problem-dependent regret bound of TS-based algorithms for CMAB problems. 

All the aforementioned works make an assumption that the arm set from which the learning agent can pull arms is fixed over all $T$ rounds, i.e., all the arms are always available and ready to be pulled. However, in practice, some of the arms may not be available in some rounds, for example, some items for online sales are out of stock temporarily. Therefore, a bunch of literature studied the setting of \emph{MAB with sleeping arms (SMAB)} ~\cite{kleinberg2010regret,Chatterjee2017AnalysisOT,hu2019int,NIPS2016_6450,NIPS2014_5381}.  In the SMAB setting, the set of available arms for each round, i.e., the availability set, can vary. For the simplest version of SMAB (only one arm is pulled in each round), the problem-dependent regret bounds of UCB-based algorithms and TS-based algorithms have been analyzed in \cite{kleinberg2010regret} and \cite{Chatterjee2017AnalysisOT}, respectively.

Regarding the combinatorial SMAB setting~(CSMAB), some negative results are shown in \cite{NIPS2016_6450}, i.e., efficient no-regret learning algorithms sometimes are computationally hard. However, for some settings such as stochastic availability and stochastic reward,  it is shown that it is still possible to devise efficient learning algorithms with good theoretical guarantees \cite{hu2019int,li2019combinatorial}. More importantly, in the work of  \cite{li2019combinatorial}, they considered a new variant called the \emph{combinatorial sleeping MAB with long-term fairness constraints (CSMAB-F)}. In this setting, fairness among arms is further considered, i.e., each arm needs to be pulled for a number of times. The authors designed a UCB-based algorithm called \emph{Learning with Fairness Guarantee~(LFG)} and provided a problem-independent time-averaged upper regret bound.

Due to the attractive practical performance and lack of theoretical guarantees for TS-based algorithms in CSMAB-F, it is desirable to devise a TS-based algorithm and derive regret bounds for such algorithms. We are interested to derive the problem-independent regret bound as it holds for all problem instances. In this work, we give the first provable regret bound that is in the same polynomial order as the one in \cite{li2019combinatorial} but with lower coefficients.  To the best of our knowledge, the derived upper bound is also the first problem-independent regret bound of TS-based algorithms for CSMAB problems which matches the lower regret bounds~\cite{kveton2015tight} when relaxing the long-term fairness constraints.

\section{Problem Formulation}\label{sec:problem}
In this section, we present the problem formulation of the stochastic combinatorial sleeping multi-armed bandit problem with fairness constraints~(CSMAB-F), following ~\cite{li2019combinatorial} closely for comparison purposes. To state the problem clearly, we first introduce the CSMAB problem and then incorporate the fairness constraints.

Let set $\mathcal{N} := \{1,2,\ldots,N\}$ be an arm set and $\Theta := 2^{\mathcal{N}}$ be the power set of $\mathcal{N}$. At the beginning of each round $t = 0, 1, \ldots, T-1$, a set of arms $Z(t) \in \Theta$ are revealed to a learning agent according to a fixed but unknown distribution $P_{Z}$ over $\Theta$, i.e., $P_{Z}: \Theta \rightarrow [0,1]$. We call set $Z(t)$ the \emph{availability set in round $t$}. Meanwhile, each arm $i \in \mathcal{N}$ is associated with a random reward $X_i(t) \in \{0,1\}$ drawn from a fixed Bernoulli distribution $D_i$ with an unknown mean $u_i := \mathbb{E}_{X_i(t) \sim D_i} \left[ X_i \right]$~\footnote{Note that we only consider Bernoulli distribution in this paper for brevity, but it is feasible to extend our algorithm and analysis with little modifications to other general distributions~(see \cite{agrawal2012analysis,Agrawal2017}).}, and a fixed known non-negative weight $w_i$ for that arm. Note that for all the arms in $\mathcal{N}$, their rewards are drawn independently in each round $t$.
Then the learning agent pulls a subset of arms $A(t)$ from the availability set with the cardinality no more than $m$, i.e., $A(t) \subseteq Z(t), |A(t)|\leq m$, and receives a weighted random reward $R(t) := \sum_{i \in A(t)} w_i X_i(t)$. 

In this work,  we consider the semi-bandit feedback setting, which is consistent with \cite{li2019combinatorial}, i.e., the learning agent can observe the individual random reward of all the arms in $A(t)$. Note that since the availability set $Z(t)$ is drawn from a fixed distribution $P_{Z}$ and the random rewards of the arms are also drawn from fixed distributions, we are in a bandit setting called the \emph{stochastic availability} and the \emph{stochastic reward}. 


The objective of the learning agent is to pull the arms sequentially to maximize the expected time-averaged rewards over $T$ rounds, i.e., $\max \mathbb{E}\left[\frac{1}{T}\sum_{t=0}^{T-1} R(t)\right] $. 

Furthermore, we consider the long-term fairness constraints proposed in \cite{li2019combinatorial}, where 
each arm $i \in \mathcal{N}$ is expected to be pulled at least $k_i \cdot T$ times when the time horizon is long enough, i.e.,
\begin{equation}\label{eq:fraction}
    \liminf _{T \rightarrow \infty} \frac{1}{T} \sum_{t=0}^{T-1} \mathbb{E}\left[\mathbf{1}[i \in A(t)]\right] \geq k_{i}, \forall i \in \mathcal{N}.
\end{equation}
We say a vector $\mathbf{k} := \begin{bmatrix}
k_1 & k_2 & \cdots & k_N
\end{bmatrix}^T$ is feasible if there exists a policy such that (\ref{eq:fraction}) is satisfied.

If we knew the availability set distribution $P_{Z}$ and the mean reward $u_i$ for each arm $i$ in advance, and $\mathbf{k}$ was feasible, then there would be a randomized algorithm which is the optimal solution for CSMAB-F problems. The algorithm chooses arms $A(t)\subseteq S$ with probability $q_S(A)$ when observing available arms $S \in \Theta$.  Let $\mathbf{q}:=\{q_S(A), \forall S \in \Theta, \forall A \subseteq S : |A| \leq m \}$. We can determine $\mathbf{q}$ by solving the following problem:
\begin{equation}\label{eq:question}
\resizebox{.9\hsize}{!}{$
    \begin{array}{ll}{
    \underset{\mathbf{q}}{\operatorname{maximize}}} & {\sum\limits_{S \in \Theta} P_{Z}(S) \sum\limits_{A \subseteq S, |A|\leq m} q_{S}(A) \sum\limits_{i \in A} w_{i} u_{i}} \\ 
    {\text { subject to }} & {\sum\limits_{S \in \Theta} P_{Z}(S) \sum\limits_{A \subseteq S, |A|\leq m : i \in A} q_{S}(A) \geq k_{i}, \forall i \in \mathcal{N}}, \\  
    {}& {\sum\limits_{A \subseteq S : |A|\leq m} q_{S}(A)=1, \forall S \in \Theta}, \\ 
    {} & {q_{S}(A) \in[0,1], \forall A \subseteq S, |A|\leq m, \forall S \in \Theta},
    \end{array}
$}
\end{equation}
where the first constraint is equivalent to the fairness constraints defined in (\ref{eq:fraction}), and the second constraint states that for each availability set $S\in \Theta$, the probability space for choosing $A(t)$ should be complete.

Denote the optimal solution to (\ref{eq:question}) as $\mathbf{q}^* = \{q^*_{S}(A), \forall S \in \Theta, A \subseteq S, |A|\leq m\}$, i.e., the optimal policy pulls $A\subseteq S$ with probability $q^*_S(A)$ when observing an available arm set $S$. We denote by $A^*(t)$ the arms pulled by the optimal policy in round $t$.

However,  $P_{Z}$ and $u_i, \forall i \in \mathcal{N}$ are unknown in advance, and the learning agent can only observe the available arms and the random rewards of the pulled arms. Therefore, the learning agent faces the dilemma between exploration and exploitation, i.e., in each round, the agent can either do exploration (acquiring information to estimate the mean reward of each arm) or exploitation (accumulating rewards as many as possible). The quality of the agent’s policy is measured by the \emph{time-averaged regret}, which is the performance loss caused by not always performing the optimal actions.
Considering the stochastic availability of each arm, we define the time-averaged regret as follows:
\begin{equation}\label{eq:regret}
\resizebox{.9\hsize}{!}{$\mathcal{R}(T) := \mathbb{E}\left[\frac{1}{T} \sum_{t=0}^{T-1} \left(\sum_{i \in A^*(t)} w_{i} X_{i}(t) - \sum_{i \in A(t)} w_{i} X_{i}(t)\right)\right]$.}
\end{equation}
\section{Thompson Sampling with Beta Priors and Bernoulli Likelihoods for CSMAB-F (TSCSF-B)}\label{sec:algorithms}


The key challenges to design an effective and efficient algorithm to solve the CSMAB-F problem can be twofold. First, the algorithm should well balance the exploration and exploitation in order to achieve a low time-averaged regret. Second, the algorithm should make a good balance between satisfying the fairness constraints and accumulating more rewards.

\begin{algorithm}[h]
\caption{Thompson Sampling with Beta Priors and Bernoulli Likelihoods for CSMAB-F (TSCSF-B)} \label{alg:CombTS1}
\begin{algorithmic}[1]
  \REQUIRE Arm set $\mathcal{N}$, combinatorial constraint $m$, fairness constraint $\mathbf{k}$, time horizon $T$ and queue weight $\eta$.
  \STATE \textbf{Initialization}: 
  $Q_i(0) = 0$, $\alpha_i(0) = \beta_i(0) = 1$, $\forall i \in \mathcal{N}$;
  \FOR{$t=0, \ldots, T-1$}
  \STATE Observe the available arm set $Z(t)$;
  \STATE For each arm $i\in Z(t)$, draw a sample $\theta_i \sim \text{Beta}(\alpha_i(t), \beta_i(t))$;
  \STATE Pull arms ${A}(t)$ according to (\ref{eq:solution});
  \STATE Observe rewards $X_i, \forall i \in {A}(t)$;
  \STATE Update $Q_i(t+1)$ based on (\ref{eq:updateQ});
  \FORALL{$i \in {A}(t)$}
  \STATE Update $\alpha_{i}(t)$ and $\beta_{i}(t)$ based on (\ref{eq:updatebeta});
  \ENDFOR
  \ENDFOR
\end{algorithmic}
\end{algorithm}

To address the first challenge, we adopt the Thompson sampling technique with beta priors and Bernoulli likelihoods to achieve the tradeoff between the exploration and exploitation. The main idea is to assume a beta prior distribution with the shape parameters $\alpha_{i}(t)$ and $\beta_{i}(t)$ (i.e., ${\rm Beta}(\alpha_{i}(t),\beta_{i}(t))$) on the mean reward of each arm $u_i$. 
Initially, we let $\alpha_{i}(0)=\beta_{i}(0)=1$, since we have no knowledge about each $u_i$ and ${\rm Beta}(1,1)$ is a uniform distribution in $[0,1]$. Then, after observing the available arms $Z(t)$, we draw a sample $\theta_i(t)$ from ${\rm Beta}(\alpha_{i}(t),\beta_{i}(t))$ as an estimate for $u_i, \forall i \in Z(t)$, and pull arms $A(t)$ according to (\ref{eq:solution}) as discussed later. The arms in $A(t)$ return rewards $X_i(t), \forall i \in A(t)$, which are used to update the beta distributions based on Bayes rules and Bernoulli likelihood for all arms in $A(t)$:
\begin{equation}\label{eq:updatebeta}
\begin{aligned}
  &\alpha_i(t+1) = \alpha_i(t) + X_i(t),\\
  &\beta_i(t+1) = \beta_i(t) + 1 - X_i(t).
\end{aligned}
\end{equation}
After a number of rounds, we are able to see that the mean of the posterior beta distributions will converge to the true mean of the reward distributions.

The virtual queue technique~\cite{li2019combinatorial,neely2010stochastic} can be used to ensure that the fairness constraints are satisfied. The high-level idea  behind the design is to establish a time-varying queue $Q_i(t)$ to record the number of times that arm $i$ has failed to meet the fairness. Initially, we set $Q_i(0) = 0$ for all $i \in \mathcal{N}$. For the ease of presentation, let $d_i(t) := \mathbf{1}[i \in {A}(t)]$ be a binary random variable indicating that whether arm $i$ is pulled or not in round $t$. Then for each arm $i \in \mathcal{N}$, we can use the following way to maintain the queue:
\begin{equation}\label{eq:updateQ}
  Q_i(t) = \max \left\{t \cdot k_i - \sum_{\tau=0}^{t-1} d_i(t), \ 0 \right\}.
\end{equation}
Intuitively, the length of the virtual queue for arm $i$ increases $k_i$ if the arm is not pulled in  round $t$.  Therefore, arms with longer queues are more unfair and will be given a higher priority to be pulled in future rounds. Note that our queue evolution is slightly different from \cite{li2019combinatorial} to avoid the rounding error accumulation issue.

To further balance the fairness and the reward, we introduce another parameter $\eta$ as a tradeoff between the reward and the virtual queue lengths. Then, in each round $t$, the learning agent pulls arms $A(t)$ as follows:


\begin{equation}\label{eq:solution}
    A(t) \in \underset{A \subseteq {Z}(t), |A| \leq m}{\operatorname{argmax}} \sum_{i \in A}\left(\frac{1}{\eta} Q_{i}(t) +  w_{i} \theta_i(t)\right).
\end{equation}
Note that different from LFG, we weigh $Q_i(t)$ with $\frac{1}{\eta}$ in (\ref{eq:solution}) rather than weighing $\theta_i(t)$ with $\eta$. The advantage is that we can simply let $\eta \rightarrow \infty$ to neglect virtual queues, so the algorithm can be adapted to CSMAB easily. The whole process of the TSCSF-B algorithm is shown in Alg.~\ref{alg:CombTS1}.

\section{Results and Proofs}\label{sec:results}
\subsection{Fairness Satisfaction}
\begin{theorem}\label{thm:feasbility}
For any fixed and finite $\eta>0$, when $T$ is long enough and the fairness constraint vector $\mathbf{k}$ is feasible, the proposed TSCSF-B algorithm satisfies the long-term fairness constraints defined in (\ref{eq:fraction}).
\end{theorem}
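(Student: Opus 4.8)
The plan is to establish fairness as a consequence of \emph{mean rate stability} of the virtual queues, proved via a Lyapunov drift argument adapted to the queue evolution in (\ref{eq:updateQ}). First I would note that fairness follows almost immediately from stability: since $Q_i(T) = \max\{T k_i - \sum_{\tau=0}^{T-1} d_i(\tau),\,0\} \ge T k_i - \sum_{\tau=0}^{T-1} d_i(\tau)$, rearranging and taking expectations yields $\frac{1}{T}\sum_{\tau=0}^{T-1}\mathbb{E}[d_i(\tau)] \ge k_i - \frac{\mathbb{E}[Q_i(T)]}{T}$. Hence if I can show $\mathbb{E}[Q_i(T)]/T \to 0$ for every arm $i$, taking $\liminf_{T\to\infty}$ recovers exactly (\ref{eq:fraction}).

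To prove $\mathbb{E}[Q_i(T)]/T \to 0$, I would introduce the quadratic Lyapunov function $L(t) := \frac{1}{2}\sum_{i\in\mathcal{N}} Q_i(t)^2$ and bound its one-step drift. Writing $G_i(t) := t k_i - \sum_{\tau=0}^{t-1} d_i(\tau)$ so that $Q_i(t) = \max\{G_i(t),\,0\}$, and using $G_i(t) \le Q_i(t)$ together with $G_i(t+1) = G_i(t) + k_i - d_i(t)$, I obtain the domination $Q_i(t+1) \le \max\{Q_i(t) + k_i - d_i(t),\,0\}$ (this is where the modified evolution matters, ruling out rounding-error accumulation). Squaring, using $(k_i - d_i(t))^2 \le 1$, and summing over $i$ gives
\begin{equation}
L(t+1) - L(t) \le \frac{N}{2} + \sum_{i\in\mathcal{N}} Q_i(t)\bigl(k_i - d_i(t)\bigr).
\end{equation}

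The crux is to control the service term $\sum_i Q_i(t) d_i(t)$ using the fact that TSCSF-B maximizes $\sum_{i\in A}(\frac{1}{\eta}Q_i(t) + w_i\theta_i(t))$. Scaling by $\eta$, the pulled set $A(t)$ maximizes $\sum_{i\in A}(Q_i(t) + \eta w_i \theta_i(t))$ over feasible $A\subseteq Z(t)$, so comparing against the set $A^*(t)$ chosen by the optimal feasible policy $\mathbf{q}^*$ and discarding the nonnegative sample contribution $\eta\sum_{i\in A^*(t)} w_i\theta_i(t)\ge 0$ gives $\sum_i d_i(t) Q_i(t) \ge \sum_i d^*_i(t) Q_i(t) - \eta m w_{\max}$, where the slack bounds $\eta\sum_{i\in A(t)} w_i \theta_i(t)$ since $|A(t)|\le m$ and $\theta_i \le 1$. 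Taking the conditional expectation given $\mathbf{Q}(t)$ and using that the optimal policy's decision $d^*_i(t)$ depends only on $Z(t)\sim P_Z$ and its own independent randomization (hence is independent of $\mathbf{Q}(t)$), feasibility of $\mathbf{k}$ gives $\mathbb{E}[d^*_i(t)]\ge k_i$, whence $\mathbb{E}[\sum_i Q_i(t) d_i(t)\mid \mathbf{Q}(t)] \ge \sum_i Q_i(t) k_i - \eta m w_{\max}$. Substituting into the drift bound cancels the $\sum_i Q_i(t) k_i$ terms and leaves the \emph{constant} bound $\mathbb{E}[L(t+1) - L(t)\mid\mathbf{Q}(t)] \le \frac{N}{2} + \eta m w_{\max} =: B$.

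Finally, telescoping over $t = 0,\ldots,T-1$ with $L(0)=0$ gives $\frac{1}{2}\sum_i \mathbb{E}[Q_i(T)^2] \le BT$, so $\mathbb{E}[Q_i(T)^2]\le 2BT$ for each $i$; Jensen's inequality then yields $\mathbb{E}[Q_i(T)] \le \sqrt{2BT}$ and hence $\mathbb{E}[Q_i(T)]/T \le \sqrt{2B/T}\to 0$, completing the argument. I expect the main obstacle to be the comparison step in the third paragraph: because this is a stability argument rather than a regret analysis, I only need a constant drift bound, so the random Thompson samples $\theta_i(t)$ — which use posterior estimates, not the true means — can be absorbed into the additive $\eta m w_{\max}$ slack instead of being tracked precisely. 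The delicate point is justifying $\mathbb{E}[d^*_i(t)\mid\mathbf{Q}(t)]\ge k_i$ uniformly, which holds precisely because the comparison policy $\mathbf{q}^*$ satisfies the first constraint of (\ref{eq:question}) and ignores the queue state, making $d^*_i(t)$ independent of $\mathbf{Q}(t)$.
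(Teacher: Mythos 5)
Your proposal is correct and follows exactly the route the paper indicates but omits: a Lyapunov drift argument establishing mean rate stability of the virtual queues (with the drift made constant by comparing TSCSF-B's argmax choice against the stationary optimal policy $\mathbf{q}^*$, whose pull probabilities dominate $k_i$ by the first constraint of (\ref{eq:question})), from which (\ref{eq:fraction}) follows since $Q_i(T) \ge Tk_i - \sum_{\tau=0}^{T-1} d_i(\tau)$. The paper defers these details to Theorem 1 of the LFG paper; your write-up simply carries them out in full, including the correct handling of the modified $\max$-based queue evolution and the absorption of the Thompson samples into the $\eta m w_{\max}$ slack.
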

\begin{proof}[Proof Sketch]
The main idea to prove Theorem \ref{thm:feasbility} is to prove the virtual queue for each arm is stable when $\mathbf{k}$ is feasible and $T$ is long enough for any fixed and finite $\eta>0$.  The proof is based on Lyapunov-drift analysis~\cite{neely2010stochastic}. Since it is not our main contribution and it follows similar lines to the proof of Theorem 1 in \cite{li2019combinatorial},  we omit the proof. Interested readers are referred to \cite{li2019combinatorial}.
\end{proof}
\begin{remark}
The long-term fairness constraints does not require arms to be pulled for a certain number of times in each round but by the end of the time horizon. Theorem \ref{thm:feasbility} states that the fairness constraints can always be satisfied by TSCSF-B as long as $\eta$ is finite and $T$ is long enough. A higher $\eta$ may require a longer time for the fairness constraints to be satisfied~(see Sec.~\ref{sec:evaluation}).
\end{remark}

\subsection{Regret Bounds}
\begin{theorem}\label{thm:regret}
For any fixed $T>1, \eta>0, w_{\max} > 0$ and $m \in (0,N]$, the time-averaged regret of TSCSF-B is upper bounded by
\begin{equation*}
   \frac{N}{2\eta} + \frac{4w_{\max}\sqrt{mNT\ln T} + 2.51 w_{\max} N }{T}.
\end{equation*}
\end{theorem}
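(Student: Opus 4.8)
The plan is to decompose the per-round regret into a fairness/queue contribution, handled by Lyapunov drift, and a Thompson-sampling estimation error, handled by concentration. Let $\hat u_i(t)$ be the empirical mean of arm $i$, $n_i(t)$ its pull count, $\rho_i(t)$ a confidence radius of order $\sqrt{(\ln T)/n_i(t)}$, and $g_i(t):=\hat u_i(t)+\rho_i(t)$ a deterministic upper-confidence surrogate. Recalling from (\ref{eq:solution}) that $A(t)$ maximizes $\sum_{i\in A}(\tfrac1\eta Q_i(t)+w_i\theta_i(t))$ over feasible $A\subseteq Z(t)$, I would first rewrite the instantaneous regret $\sum_{i\in A^*(t)}w_iu_i-\sum_{i\in A(t)}w_iu_i$ by inserting $g_i(t)$, producing (a) $\sum_{A^*}w_i(u_i-g_i)$, which is nonpositive on the event that every $|\hat u_i-u_i|\le\rho_i$; (b) $\sum_A w_i(g_i-u_i)\le 2\sum_A w_i\rho_i$ on the same event; and (c) the selection term $\sum_{A^*}w_ig_i-\sum_A w_ig_i$. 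Passing (c) through the samples splits it into a queue piece $\sum_{A^*}w_i\theta_i-\sum_A w_i\theta_i$, an optimistic-sampling deficit $\sum_{A^*}w_i(g_i-\theta_i)^+$, and a remainder $\sum_A w_i(\theta_i-g_i)^+$ that vanishes on the event $\{\theta_i\le g_i\}$.

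For the queue piece I would reuse the drift machinery behind Theorem~\ref{thm:feasbility}. With $L(t)=\tfrac12\sum_i Q_i(t)^2$, the recursion in (\ref{eq:updateQ}) gives $L(t+1)-L(t)\le\tfrac N2+\sum_i Q_i(t)(k_i-d_i(t))$ since $(k_i-d_i(t))^2\le1$, where $d_i(t)=\mathbf{1}[i\in A(t)]$. The argmax property of (\ref{eq:solution}) yields $\sum_{A^*}w_i\theta_i-\sum_A w_i\theta_i\le\tfrac1\eta(\sum_A Q_i-\sum_{A^*}Q_i)$, while feasibility of $\mathbf k$ (the first constraint of (\ref{eq:question})) gives $\mathbb{E}[\sum_{i\in A^*(t)}Q_i(t)\mid\mathcal H_t]\ge\sum_i k_iQ_i(t)$, because $A^*(t)$ is drawn independently of the history with inclusion probability at least $k_i$. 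Bounding $\sum_A Q_i$ by the drift inequality and $\sum_{A^*}Q_i$ from below, the $\sum_i k_iQ_i(t)$ terms cancel; summing over $t$ and telescoping $L$ with $L(0)=0,\ L(T)\ge0$ collapses this piece to exactly $\tfrac{N}{2\eta}$, the first term of the bound.

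The estimation error is then a concentration computation. I would take the good event to be the intersection over all arms and rounds of $\{|\hat u_i(t)-u_i|\le\rho_i(t)\}$ and $\{\theta_i(t)\le g_i(t)\}$, the former via Chernoff--Hoeffding and a union bound over the $O(NT)$ (arm, count) pairs, the latter via the Beta--Binomial identity that rewrites a Beta tail as a binomial tail. On this event term (b) is at most $2\sum_t\sum_{i\in A(t)}w_i\rho_i(t)$, and the counting bound $\sum_t\sum_{i\in A(t)}n_i(t)^{-1/2}\le 2\sum_i\sqrt{n_i(T)}\le 2\sqrt{N\sum_i n_i(T)}\le 2\sqrt{mNT}$ (Cauchy--Schwarz with $\sum_i n_i(T)\le mT$) turns it into a term of order $w_{\max}\sqrt{mNT\ln T}$. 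The complementary failure probabilities sum, per arm, to a constant (a $\sum_t t^{-2}$-type series), each occurrence costing at most $w_{\max}$, which after division by $T$ produces the $O(w_{\max}N)/T$ constant term and fixes the numerical $2.51$.

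The main obstacle is the optimistic-sampling deficit $\sum_{A^*}w_i(g_i-\theta_i)^+$, which has no counterpart in the UCB analysis of LFG: there the index \emph{is} the surrogate $g_i$, so term (c) is nonpositive outright, whereas a Thompson sample may fall below its own upper-confidence surrogate. I would control it through Beta anti-concentration, showing $\Pr[\theta_i(t)>g_i(t)\mid\mathcal H_t]$ is bounded below by an absolute constant; this lets one transfer the deficit of an optimal arm onto the exploration of the arms actually pulled, at the cost of only a constant factor, in the spirit of the Agrawal--Goyal argument. The delicate point is carrying out this transfer under the combinatorial cardinality constraint and the stochastic availability, since $A^*(t)\subseteq Z(t)$ is random and may contain arms the algorithm has under-explored; handling those under-explored optimal arms by anti-concentration rather than the trivial bound $w_i\le w_{\max}$ is what yields the second $w_{\max}\sqrt{mNT\ln T}$ contribution and hence the coefficient $4$, and it is the step I expect to demand the most care.
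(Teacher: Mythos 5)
Your drift/queue argument and your bound on term (b) are correct and essentially coincide with the paper's Lemma \ref{lm:main} and its bound on $B_2$: testing the argmax property of (\ref{eq:solution}) against a feasible competitor, using $\mathbb{E}\left[\sum_{i\in A^*(t)}Q_i(t)\right]\ge\sum_i k_iQ_i(t)$, and telescoping the Lyapunov function yields $\frac{N}{2\eta}$, while the integration/Cauchy--Schwarz counting argument yields $2w_{\max}\sqrt{mNT\ln T}$. The genuine gap is in the step you yourself single out as the crux: controlling the deficit $\sum_{i\in A^*(t)}w_i\left(g_i(t)-\theta_i(t)\right)^+$ by showing $\Pr\left[\theta_i(t)>g_i(t)\mid\mathcal{F}_{t-1}\right]$ is bounded below by an absolute constant. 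That claim is false. The threshold $g_i(t)=\hat u_i(t)+\rho_i(t)$ sits $\Theta(\sqrt{\ln T})$ posterior standard deviations above the posterior mean, so the very Beta--Binomial-plus-Chernoff computation you invoke for the good event (and which the paper uses in Lemma \ref{lm:ft}) shows this probability is $O(T^{-c})$; for an under-pulled arm with $n_i(t)=1$ the radius is of order $\sqrt{\ln T}$, which exceeds $1$ for large $T$, so $g_i(t)>1\ge\theta_i(t)$ and the probability is exactly zero. Moreover the claim is internally inconsistent with your own construction: the good event requires $\{\theta_i(t)\le g_i(t)\}$ to hold for all $O(NT)$ arm--round pairs with high probability, forcing $\Pr\left[\theta_i(t)>g_i(t)\mid\mathcal{F}_{t-1}\right]=O(1/(NT))$, and the same conditional probability cannot simultaneously be $\Omega(1)$. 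Agrawal--Goyal anti-concentration holds for thresholds at or below the true mean (e.g.\ $u_i$ or $u_i-\epsilon$), not for the empirical mean plus a confidence radius, so the proposed transfer cannot be run against $g_i$.

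The paper's proof is arranged precisely so that this term never arises. Lemma \ref{lm:main} compares $A(t)$ not with $A^*(t)$ through a UCB surrogate but with a deterministic oracle $A'(t)\in\operatorname{argmax}_{A\subseteq Z(t),|A|\le m}\sum_{i\in A}\left(\frac{1}{\eta}Q_i(t)+w_iu_i\right)$ built from the true means and the queues; the argmax property of (\ref{eq:solution}) then bounds the regret by $\frac{N}{2\eta}+\frac{1}{T}\sum_{t}\mathbb{E}\left[\sum_{i\in A(t)}w_i\left(\theta_i(t)-u_i\right)+\sum_{i\in A'(t)}w_i\left(u_i-\theta_i(t)\right)\right]$. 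Both remaining terms are deviations of samples from true means, handled by two-sided concentration alone: Lemmas \ref{lm:ft} and \ref{lm:kt} show $\Pr(\mathcal{J}_i(t))$ and $\Pr(\mathcal{K}_i(t))$ are inverse-polynomial in $T$, and on the complementary events the deviations are at most $2\gamma_i(t)$, summed exactly as in your term (b). In particular, the second $2w_{\max}\sqrt{mNT\ln T}$ in the paper comes from this complement-event sum over the oracle's arms ($B_4$), not from any anti-concentration transfer. If you wish to keep your decomposition, the repair is to drop $g_i$ and compare through $u_i$ as the paper does; the problematic deficit then becomes $\sum_{i\in A'(t)}w_i\left(u_i-\theta_i(t)\right)$, whose large-deviation part is a genuinely low-probability event ($\mathcal{K}_i(t)$) rather than a constant-probability one.
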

\begin{proof}[Proof Sketch]\label{skpf:regret1}
We only provide a sketch of proof here, and the detailed proof can be found in Appendix. The optimal policy for CSMAB-F is a randomized algorithm defined in Sec.~\ref{sec:problem}, while the optimal policies for classic MAB problems are deterministic. We follow the basic idea in \cite{li2019combinatorial} to convert the regret bound between the randomized optimal policy and TSCSF-B~(i.e., regret) by the regret bound between a deterministic oracle and TSCSF-B. The deterministic oracle also knows the mean reward for each arm, and can achieve more rewards than the optimal policy by sacrificing fairness constraints a bit. Denote the arms pulled by the oracle in round $t$ as $A^{\prime}(t)$, which is defined by
\begin{equation*}
    A^{\prime}(t) \in \underset{A \subseteq Z(t), |A| \leq m}{\operatorname{argmax}} \sum_{i \in A}\left(\frac{1}{\eta} Q_{i}(t) +  w_{i} u_{i}(t)\right).
\end{equation*}
Then, we can prove that the time-averaged regret defined in (\ref{eq:regret}) is bounded by
\begin{equation}\label{eq:astar}
\resizebox{.9\hsize}{!}{$
\frac{N}{2 \eta}+\frac{1}{T} \left(\underbrace{\sum_{t=0}^{T-1} \mathbb{E}\left[\sum_{i \in A(t)} w_{i}\left(\theta_{i}(t)-u_{i}\right)\right] + \sum_{t=0}^{T-1}\mathbb{E}\left[\sum_{i \in A^{\prime}(t)} w_{i}\left(u_{i}-\theta_{i}(t)\right)\right]}_{C}\right)$,}
\end{equation}
where the first term $\frac{N}{2\eta}$ is due to the queuing system, and the second part $C$ is due to the exploration and exploitation.

Next, we define two events and their complementary events for each arm $i$ to decompose $C$. Let $\gamma_i(t) := \sqrt{\frac{2\ln T}{h_i(t)}}$, where $h_i(t)$ is the number of times that arm $i$ has been pulled at the beginning of round $t$. Then for each arm $i\in \mathcal{N}$, the two events $\mathcal{J}_i(t)$ and $\mathcal{K}_i(t)$ are defined as follows:
\begin{equation*}
\begin{aligned}
   &\mathcal{J}_i(t) := \{\theta_i(t)  - u_i > 2\gamma_i(t) \},\\
   &\mathcal{K}_i(t) := \{u_i - \theta_i(t)> 2\gamma_i(t) \},
\end{aligned}
\end{equation*}
and let $\overline{\mathcal{J}_i(t)}$ and $\overline{\mathcal{K}_i(t)}$ be the complementary events for  $\mathcal{J}_i(t)$ and $\mathcal{K}_i(t)$, respectively.
Notice that both $\mathcal{J}_i(t)$ and $\mathcal{K}_i(t)$ are low-probability events after a number of rounds.

With the events defined above, we can decompose $C$ as
\begin{equation*}
\resizebox{1.0\hsize}{!}{$
\begin{aligned}
     &\underbrace{\sum_{t=0}^{T-1} \mathbb{E}\left[\sum_{i\in A(t)} w_{i}\left(\theta_{i}(t)-u_{i}\right) \mathbf{1}[\mathcal{J}_i(t)] \right]}_{B_1}
     + \underbrace{\sum_{t=0}^{T-1} \mathbb{E}\left[\sum_{i\in A(t)} w_{i}\left(\theta_{i}(t)-u_{i}\right) \mathbf{1}[\overline{\mathcal{J}_i(t)}] \right]}_{B_2}\\
     + &\underbrace{\sum_{t=0}^{T-1} \mathbb{E}\left[\sum_{i\in A^\prime(t)} w_{i}\left(\theta_{i}(t)-u_{i}\right) \mathbf{1}\left[\mathcal{K}_i(t)\right]\right]}_{B_3}
     + \underbrace{\sum_{t=0}^{T-1} \mathbb{E}\left[\sum_{i\in A^\prime (t)} w_{i}\left(\theta_{i}(t)-u_{i}\right)  \mathbf{1}\left[\overline{\mathcal{K}_i(t)}\right] \right]}_{B_4}.
\end{aligned}$}
\end{equation*}

Using the relationship between the summation and integration, we can bound $B_2$ and $B_4$ by $4w_{\max} \sqrt{mNT\ln T} + w_{\max}N$.

Bounding $B_1$ and $B_3$ is the main theoretical contribution of our work and is not trivial. Since $\mathcal{J}_i(t)$ and $\mathcal{K}_i(t)$ are low-probability events, the total times they can happen are a constant value on expectation. Therefore, to bound $B_1$ and $B_3$,  the basic idea is to obtain the bounds for $\Pr(\mathcal{J}_i(t))$ and $\Pr(\mathcal{K}_i(t))$. Currently, there is no existing work giving the bounds for $\Pr(\mathcal{J}_i(t))$ and $\Pr(\mathcal{K}_i(t))$, and we prove that $\Pr(\mathcal{J}_i(t))$ and $\Pr(\mathcal{K}_i(t))$ are bounded by $\left(\frac{1}{T^2}+\frac{1}{T^{32}}\right)$ and $\left(\frac{1}{T^8} + \frac{1}{T^{32}} \right)$, respectively. Then, it is straightforward to bound $B_1$ and $B_3$ by $2.51w_{\max}N$.

\end{proof}
\begin{remark}
Comparing with the time-averaged regret bound for LFG~\cite{li2019combinatorial},  we have the same first term $\frac{N}{2\eta}$, as we adopt the virtual queue system to satisfy the fairness constraints. On the other hand, the second part of our regret bound, which is also the first problem-independent regret bound for CSMAB problems, has lower coefficients than that of LFG. Specifically, the coefficient for the time-dependent term (i.e., $\sqrt{mNT\ln T}$) is $4$ in our bound, smaller than $2\sqrt{6}$ in that of LFG, and the time-independent term (i.e., $w_{\max}N$) has a coefficient $2.51$ in our bound, which is also less than $5.11$ in the bound of LFG.

If we let $\eta \rightarrow \infty$, the algorithm only focuses on CSMAB problems, and the bound boils down to the first problem-independent bound of TS-based algorithms for CSMAB problems, which matches the lower bound proposed in \cite{kveton2015tight}.
\end{remark}
\begin{corollary}\label{cr:1}
For any fixed $m \in (0,N]$ and $\eta \geq \sqrt{\frac{NT}{m\ln T}}$, when $T \geq N$, the time-averaged regret of TSCSF-B is upper bounded by $\widetilde{O}\left(\frac{\sqrt{mNT}}{T}\right).$
\end{corollary}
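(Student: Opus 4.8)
The plan is to prove the corollary by direct substitution into the regret bound of Theorem~\ref{thm:regret}, since the corollary is merely a specialization obtained by choosing $\eta$ large enough that the queueing term $\frac{N}{2\eta}$ ceases to dominate. First I would plug the hypothesis $\eta \geq \sqrt{\frac{NT}{m\ln T}}$ into the first term of the bound, which yields
$$\frac{N}{2\eta} \leq \frac{N}{2}\sqrt{\frac{m\ln T}{NT}} = \frac{\sqrt{mNT\ln T}}{2T}.$$
Thus the queueing contribution is already of order $\frac{\sqrt{mNT\ln T}}{T}$, matching (up to the $\sqrt{\ln T}$ factor and constants) the target $\widetilde{O}\!\left(\frac{\sqrt{mNT}}{T}\right)$.

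Next I would fold this into the time-dependent sampling term $\frac{4w_{\max}\sqrt{mNT\ln T}}{T}$ of Theorem~\ref{thm:regret}. Both are of the same order, so their sum remains $O\!\left(\frac{w_{\max}\sqrt{mNT\ln T}}{T}\right)$, and no new order arises.

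The only step invoking a nontrivial hypothesis is absorbing the time-independent term $\frac{2.51\,w_{\max}N}{T}$. Here I would use $T \geq N$ together with $m \geq 1$ (valid since $m$ is a cardinality, hence a positive integer), which gives $\sqrt{mNT\ln T} \geq \sqrt{N\cdot N\cdot \ln T} = N\sqrt{\ln T} \geq N$ for $T \geq e$, so that $\frac{N}{T} \leq \frac{\sqrt{mNT\ln T}}{T}$. Hence the constant term is dominated by the time-dependent one. Collecting all three contributions and treating $w_{\max}$ as a constant, the total time-averaged regret is $O\!\left(\frac{\sqrt{mNT\ln T}}{T}\right) = \widetilde{O}\!\left(\frac{\sqrt{mNT}}{T}\right)$, where $\widetilde{O}$ suppresses the polylogarithmic $\sqrt{\ln T}$ factor.

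I expect no genuine obstacle, as the whole argument is a matter of bounding and comparing growth orders; the two points requiring care are verifying that the prescribed threshold on $\eta$ is exactly what drives $\frac{N}{2\eta}$ down to the same order as the sampling term, and confirming that the mild condition $T \geq N$ (rather than anything stronger) already suffices to swallow the additive $\frac{N}{T}$ term.
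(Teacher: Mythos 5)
Your proposal is correct and follows exactly the paper's (implicit) reasoning: the corollary is obtained by substituting $\eta \geq \sqrt{\frac{NT}{m\ln T}}$ into Theorem~\ref{thm:regret} so that $\frac{N}{2\eta} \leq \frac{\sqrt{mNT\ln T}}{2T}$ matches the order of the sampling term, with $T \geq N$ and $m \geq 1$ absorbing the additive $\frac{2.51\,w_{\max}N}{T}$ term — precisely what the paper's remark describes as controlling the first term "to have a consistent or lower order than the second term." No gaps; the only negligible caveat is that your intermediate inequality $N\sqrt{\ln T}\geq N$ needs $T\geq e$, which is immaterial under $\widetilde{O}$ notation (and avoidable, since $\frac{N}{T}\leq\frac{\sqrt{mNT}}{T}$ already follows from $T\geq N$, $m\geq 1$).
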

\begin{remark}
The reason we let $\eta \geq \sqrt{\frac{NT}{m\ln T}}$ for a given $T$ is to control the first term to have a consistent or lower order than the second term. However, in practice, we need to tune $\eta$ according to $T$ such that both fairness constraints and high rewards can be achieved.
\end{remark}




\section{Evaluations and Applications}\label{sec:evaluation}

\subsection{Numerical Experiments}
In this section, we compare the TSCSF-B algorithm with the LFG algorithm~\cite{li2019combinatorial} in two settings. The first setting is identical to the setting in~\cite{li2019combinatorial},  where $N=3$, $m=2$, and $w_i = 1, \forall i \in \mathcal{N}$. The mean reward vector for the three arms is $(0.4, 0.5, 0.7)$. The availability of the three arms is $(0.9,0.8,0.7)$, and the fairness constraints for the three arms are $(0.5,0.6,0.4)$. To see the impact of $\eta$ on the time-averaged regret and fairness constraints, we compare the algorithms under $\eta=1, 10, 1000$ and $\infty$ in a time horizon $T = 2 \times 10^4$, where $\eta \rightarrow \infty$ indicates that both algorithms do not consider the long-term fairness constraints.

Further, we test the algorithms in a more complicated setting where $N=6$, $m=3$, and $w_i = 1, \forall i \in \mathcal{N}$. The mean reward vector for the six arms is $(0.52,0.51,0.49,0.48,0.7,0.8)$. The availability of the six arms is $(0.7,0.6,0.7,0.8,0.7,0.6)$, and the fairness constraints for the six arms are $(0.4,0.45,0.3,0.45,0.3,0.4)$.
This setting is challenging because higher fairness constraints are given to the arms with less mean rewards and the arms with lower availability (i.e., arms $2$ and $4$).
According to Corollary \ref{cr:1}, we set $\eta=\sqrt{\frac{NT}{m\ln T}} = 63.55$
and $\infty$, and $T = 2 \times 10^4$.
Note that the following results are the average of $100$ independent experiments. We omit the plotting of confidence interval and deviations because they are too small to be seen from the figures and are also omitted in most bandit papers.

\subsubsection{Time-Averaged Regret}
The time-averaged regret results under the first setting and the second setting are shown in Fig.~\ref{fig:3regret} and Fig.~\ref{fig:setting2}, respectively.
\begin{figure}[h]
\subfloat [$\eta = 1$] {\label{fig:3regret:a}
\epsfig{file=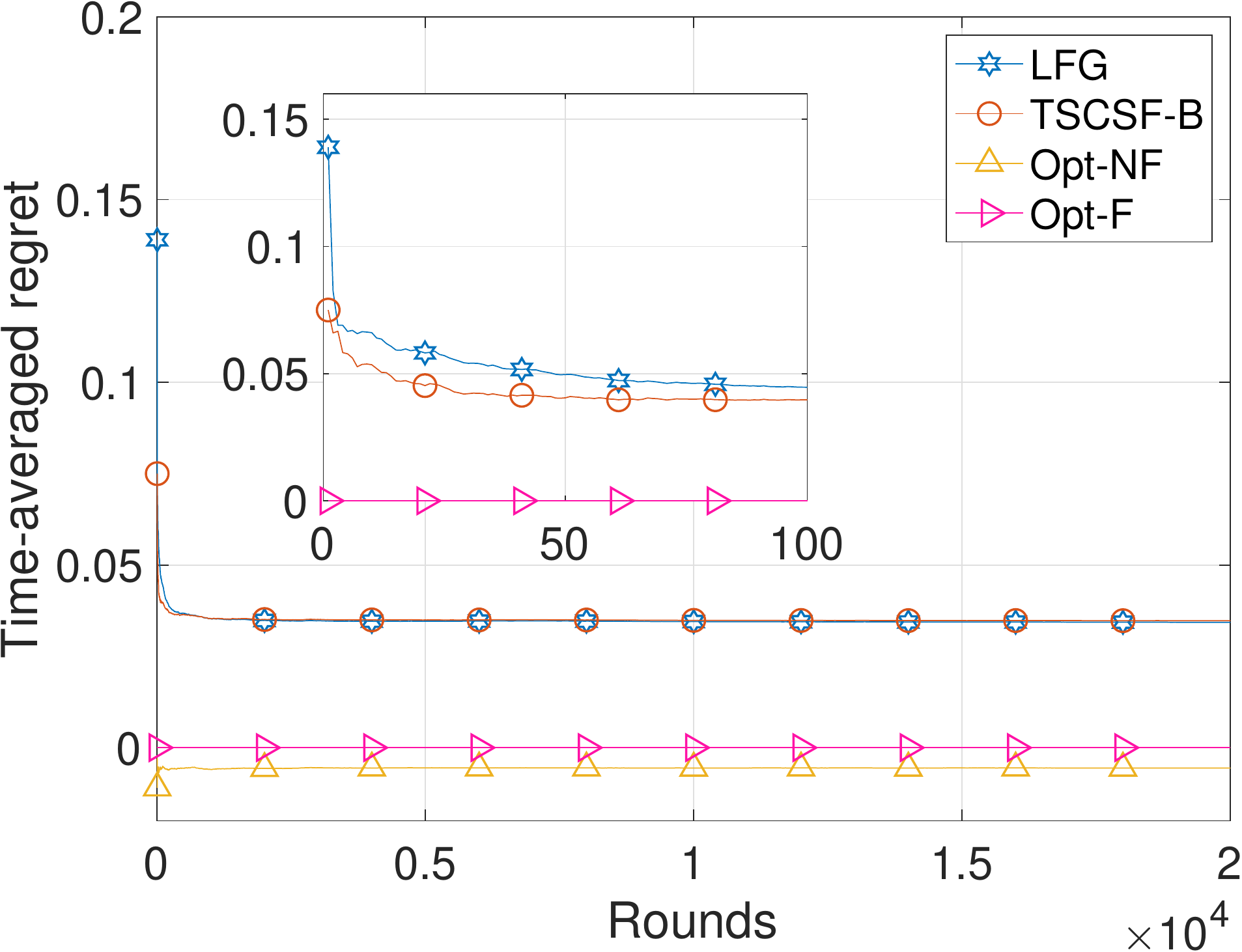, width = 0.48\columnwidth}
}
\subfloat [$\eta = 10$] {\label{fig:3regret:b}
\epsfig{file = 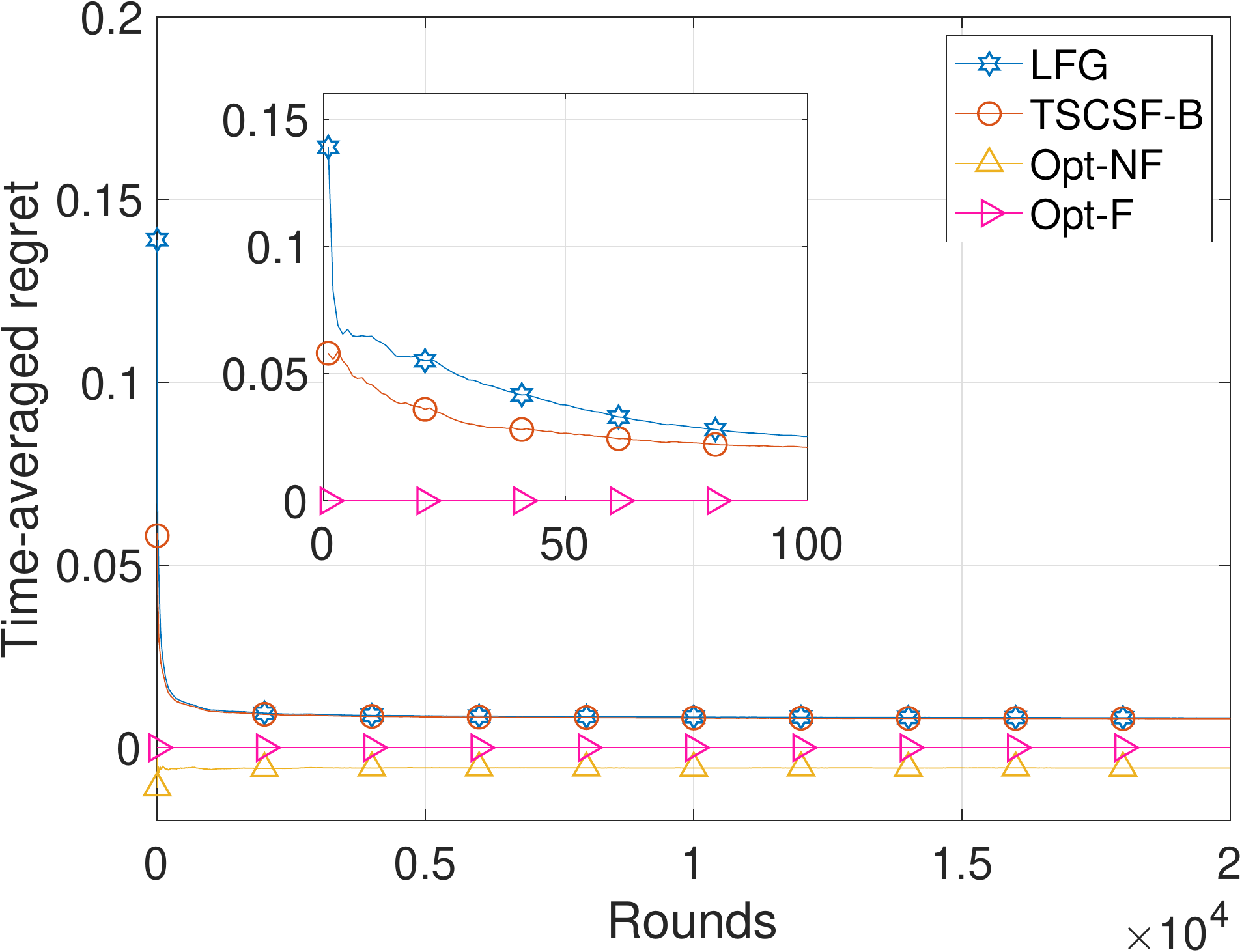, width = 0.48\columnwidth}
}

\subfloat [$\eta = 1000$] {\label{fig:3regret:d}
\epsfig{file = 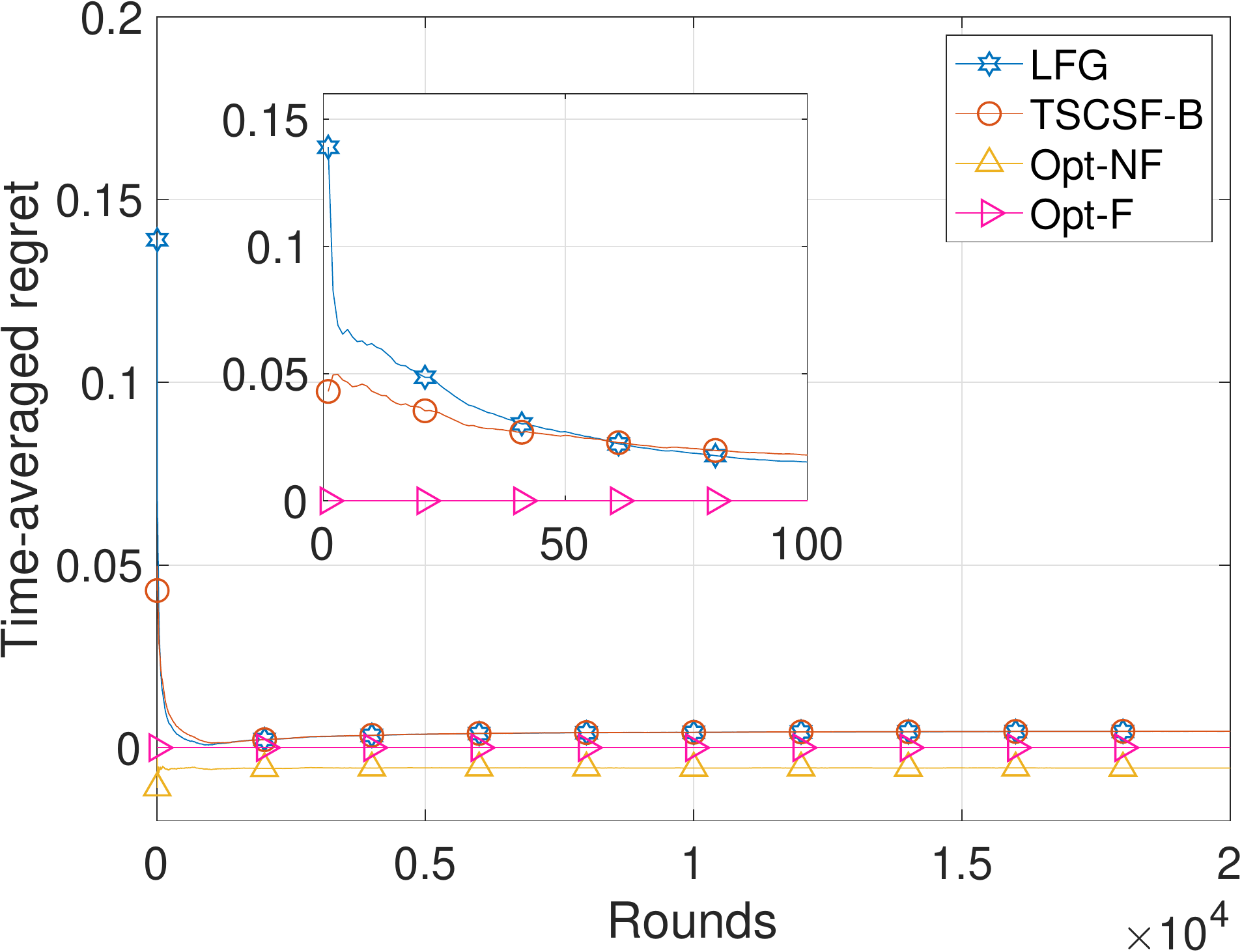, width = 0.48\columnwidth}
}
\subfloat [$\eta \rightarrow \infty $] {\label{fig:3regret:e}
\epsfig{file = 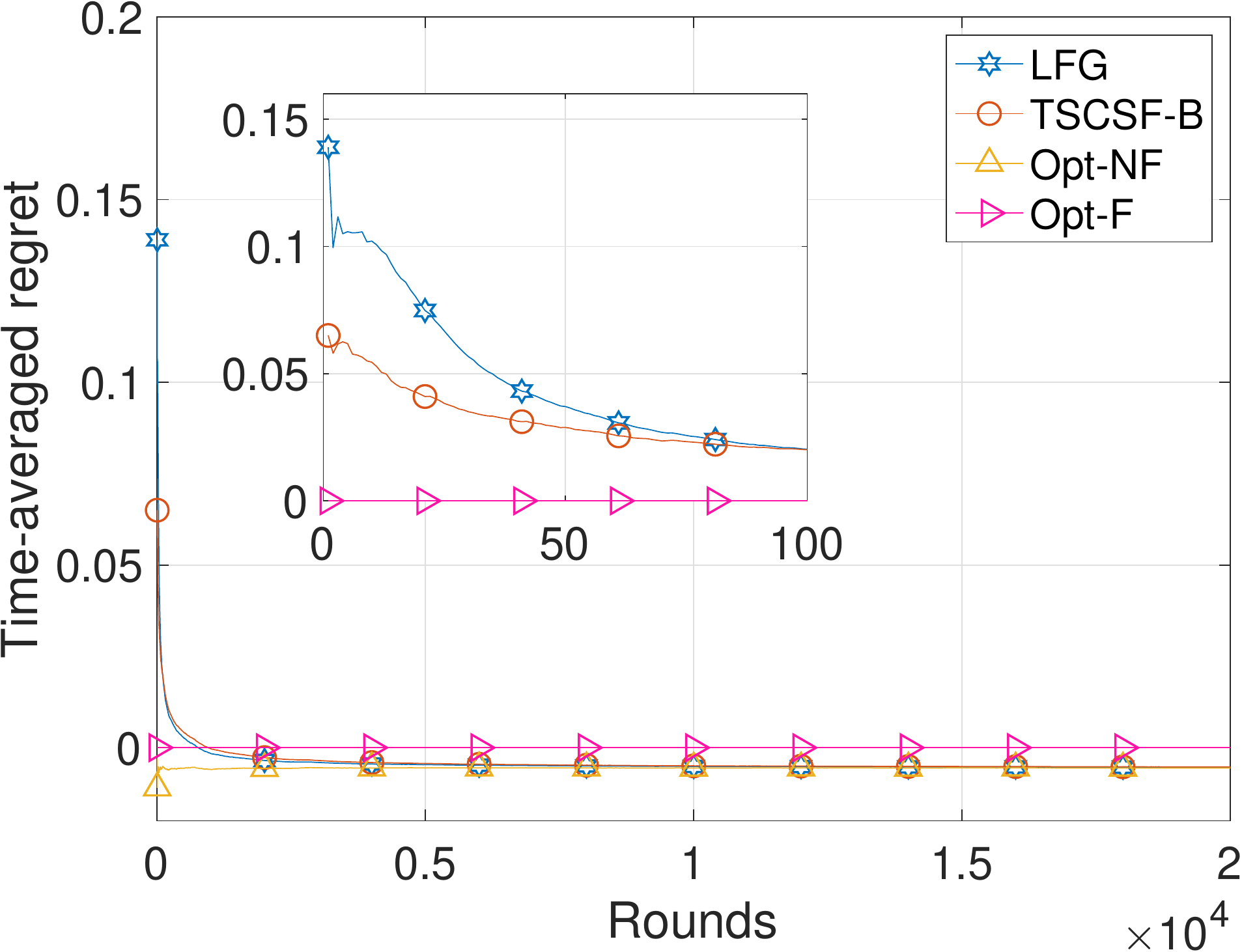, width = 0.48\columnwidth}
}
\caption{Time-averaged regret for the first setting.}\label{fig:3regret}
\vspace{-13pt}
\end{figure}

\begin{figure}
\subfloat [$\eta = \sqrt{\frac{NT}{m\ln T}}$] {\label{fig:setting2:a}
\epsfig{file=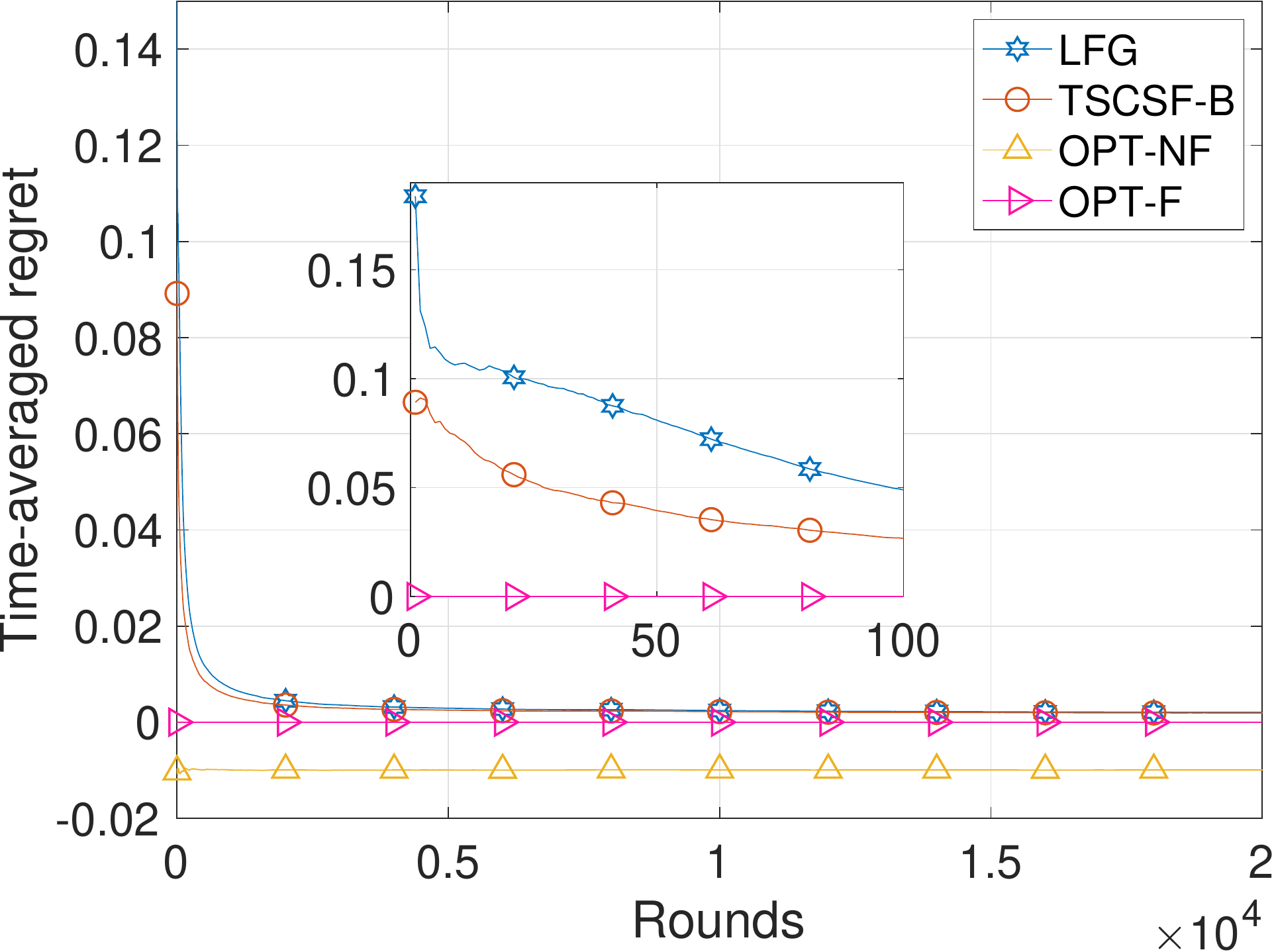, width = 0.48\columnwidth}
}
\subfloat [$\eta \rightarrow \infty $] {\label{fig:setting2:c}
\epsfig{file= 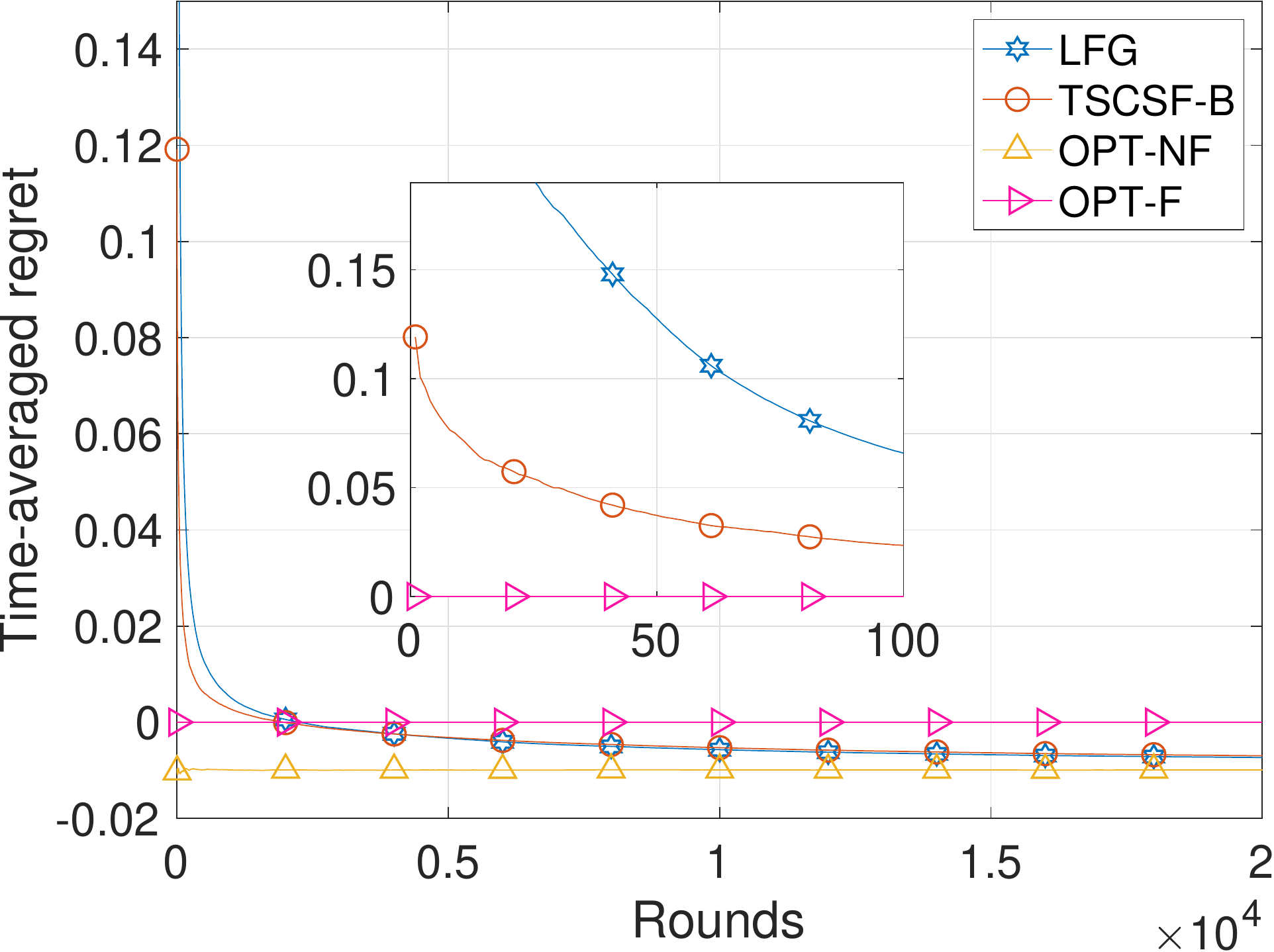, width = 0.48\columnwidth}
}

\caption{Time-averaged regret for the second setting.}\label{fig:setting2}
\vspace{-13pt}
\end{figure}
In each subplot, the $x$-axis represents the rounds and the $y$-axis is the time-averaged regret. A small figure inside each subplot zooms in the first $100$ rounds. We also plot the \emph{OPT with considering fairness~(Opt-F)} (i.e., the optimal solution to CSMAB-F), and \emph{OPT without considering fairness~(Opt-NF)} (i.e., the optimal solution to CSMAB). The time-averaged regret of Opt-NF is always below Opt-F, since Opt-NF does not need to satisfy the fairness constraints and can always achieve the highest rewards. By definition, the regret of Opt-F is always $0$.

We can see that the proposed TSCSF-B algorithm has a better performance than the LFG algorithm, since it converges faster, and achieves a lower regret, as shown in Fig.~\ref{fig:3regret} and Fig.~\ref{fig:setting2}. It is noteworthy that the gap between TSCSF-B and LFG is larger in Fig.~\ref{fig:setting2}, which indicates that TSCSF-B performs better than LFG in more complicated scenarios.

In terms of $\eta$, the algorithms with a higher $\eta$ can achieve a lower time-averaged regret. For example, in the first setting, the lowest regrets achieved by the two considered algorithms are around $0.03$ when $\eta = 1$,  but they are much closer to Opt-F when $\eta = 10$. However, when we continue to increase $\eta$ to $1000$ (see Fig.~\ref{fig:3regret:d}), the considered algorithms achieve a negative time-averaged regret around $0.2 \times 10^4$ rounds, but recover to the positive value afterwards. This is due to the fact that with a high $\eta$ the algorithms preferentially pull arms with the highest mean rewards, but the queues still ensure the fairness can be achieved in future rounds.
When $\eta \rightarrow \infty$ (see Fig.~\ref{fig:3regret:e} and Fig.~\ref{fig:setting2:c}), the fairness constraints are totally ignored and the regrets of the considered algorithms converge to Opt-NF. Therefore, $\eta$ significantly determines whether the algorithms can satisfy and how quickly they satisfy the fairness constraints.

\subsubsection{Fairness Constraints}

In the first setting, we show in Fig.~\ref{fig:fairness1} the final satisfaction of fairness constraints for all  arms under $\eta = 1000$.  $\eta = 1000$ is an interesting setting where the fairness constraints are not satisfied in the first few rounds as aforementioned. We want to point out in the first setting, the fairness constraint for arm $1$ is relatively difficult to be satisfied, since arm $1$ has the lowest mean reward but has a relative high fairness constraint.  However,  we can see that the fairness constraints for all arms are satisfied finally, which means both TSCSF-B and LFG are able to ensure the fairness constraints in this simple setting.

In the second setting with $\eta = \sqrt{\frac{NT}{m\ln T}} = 63.55$, the fairness constraints for arms $2$ and $4$ are difficult to satisfy, as both arms have high fairness constraints but low availability or low mean reward. However, both TSCSF-B and LFG manage to satisfy the fairness constraints for all the $6$ arms, as shown in  Fig.~\ref{fig:fairness2}.

\begin{figure}[h]
    \centering
    \includegraphics[width = 0.8\columnwidth]{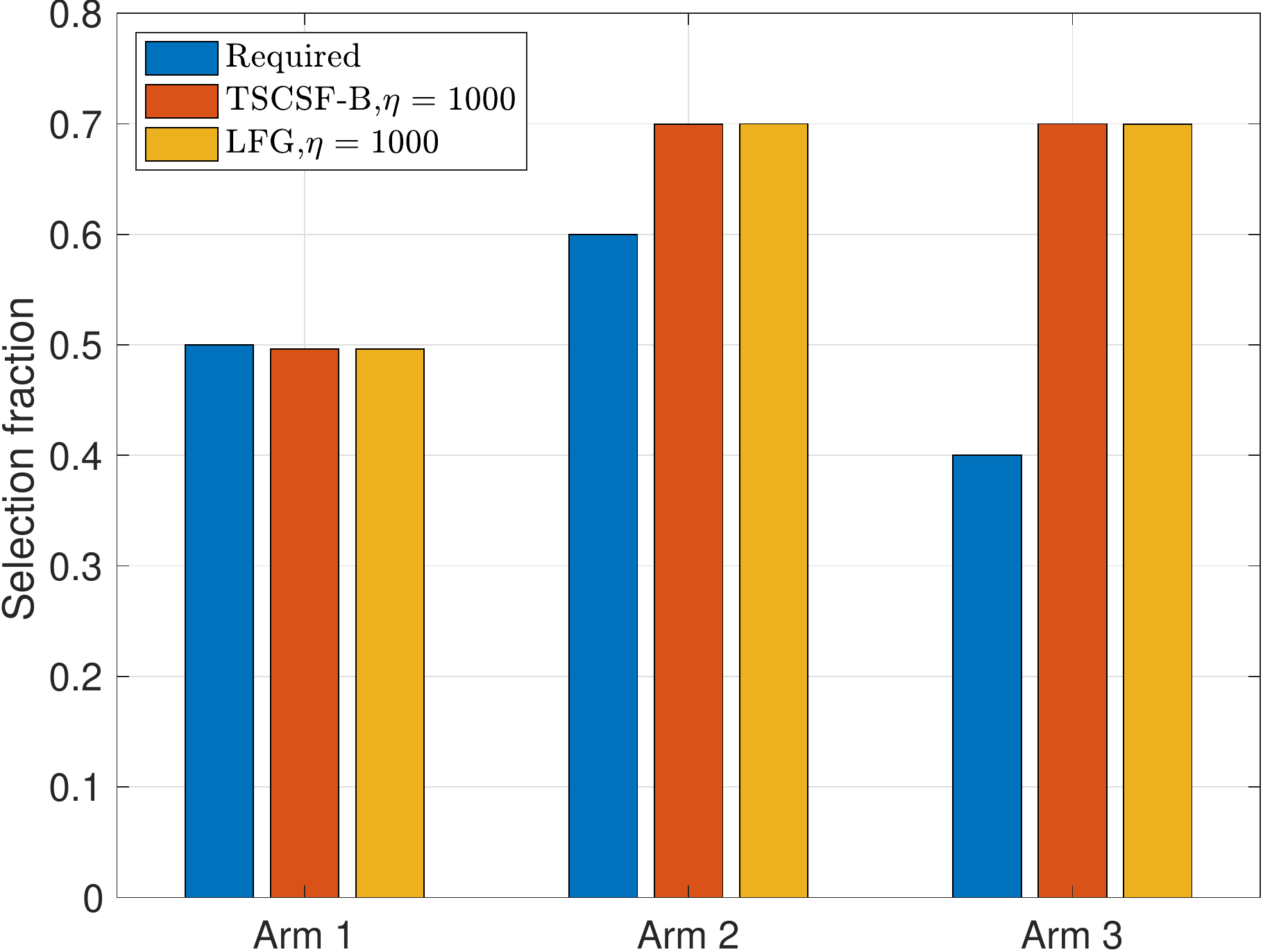}
    \caption{Satisfaction of fairness constraints in the first setting.}
    \label{fig:fairness1}
\end{figure}

\begin{figure}[h]
    \centering
    \includegraphics[width = 0.8\columnwidth]{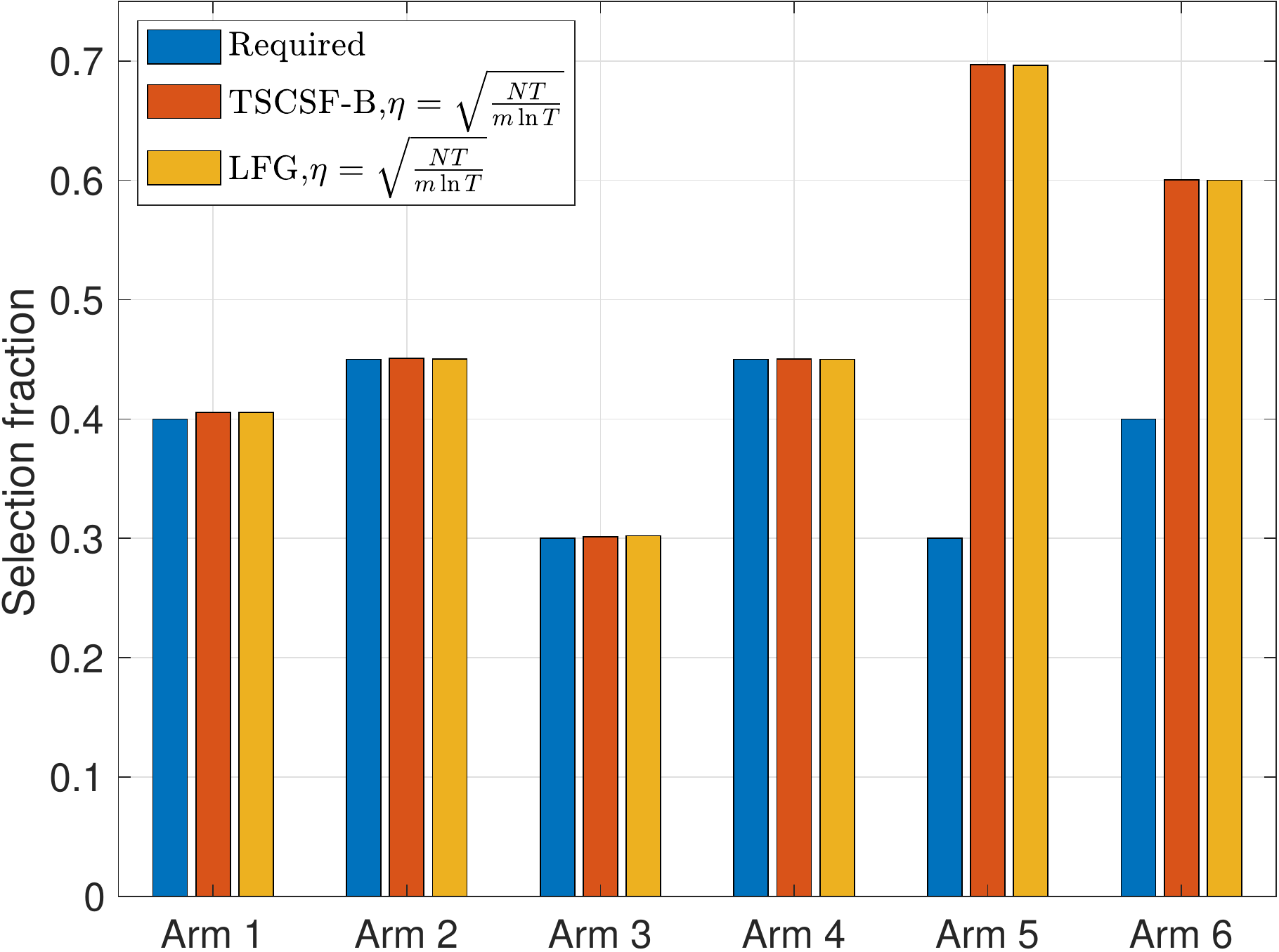}
    \caption{Satisfaction of fairness constraints in the second setting.}
    \label{fig:fairness2}
\end{figure}


\subsection{Tightness of the Upper bounds}
Finally, we show the tightness of our bounds in the second setting, as plotted in Fig.~\ref{fig:tightness}. The $x$-axis represents the change of the time horizon $T$, and the $y$-axis is the logarithmic time-averaged regret in the base of $e$.

We can see that, the upper bound of TSCSF-B is always below that of LFG. However, there is a big gap between the TSCSF-B upper bound and the actual time-averaged regret in the second setting. This is reasonable, since the upper bound is problem-independent, but it is still of interest to find tighter bound for CSMAB-F problems.


\begin{figure}
    \centering
    \includegraphics[width=\columnwidth]{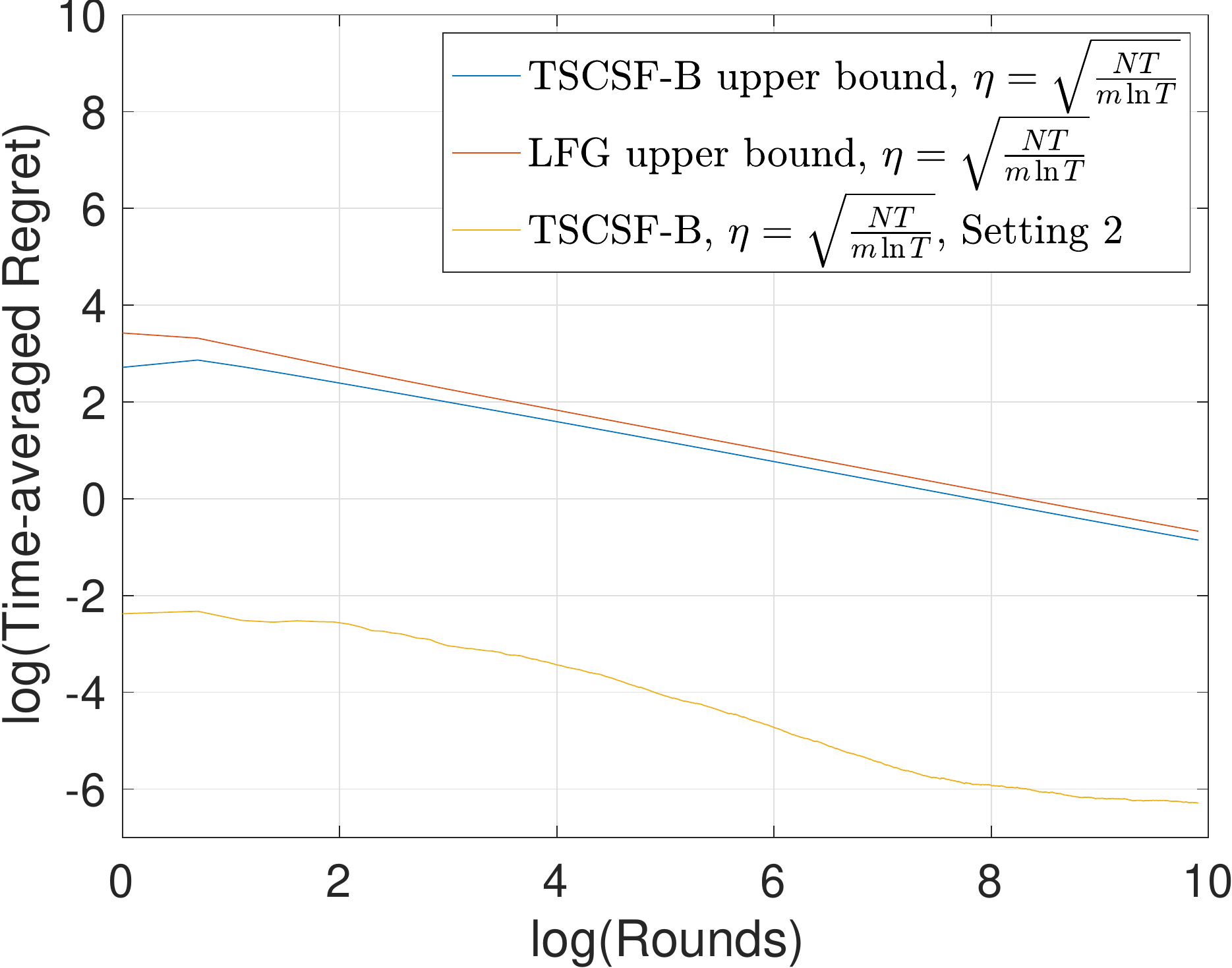}
    \caption{Tightness of the upper bounds for TSCSF-B}
    \label{fig:tightness}
\end{figure}

\subsection{High-rating movie recommendation System}
In this part, we consider a high-rating movie recommendation system. The objective of the system is to recommend high-rating movies to users, but the ratings for the considered movies are unknown in advance. Thus, the system needs to learn the ratings of the movies while simultaneously recommending the high-rating ones to its users. Specifically, when each user comes, the movies that are relevant to the user's preference are available to be recommended. Then, the system recommends the user with a subset of the available movies subject.
After consuming the recommended movies, the user gives feedback to the system, which can be used to update the ratings of the movies to better serve the upcoming users. In order to acquire accurate ratings or to ensure the diversity of recommended movies, each movie should be recommended at least a number of times.

The above high-rating movie recommendation problem can be modeled as a CSMAB-F problem under three assumptions.
First, we take serving one user as a round by assuming the next user always comes after the current user finishes rating. This assumption can be easily relaxed by adopting \emph{the delayed feedback framework} with an additive penalty to the regret~\cite{joulani2013online}.
Second, the availability set of movies is stochastically generated according to an unknown distribution.
Last, given a movie, the ratings are i.i.d. over users with respect to an unknown distribution. The second and third assumptions are feasible, as it has been discovered that the user preference and ratings towards movies have a strong relationship to the Zipf distribution~\cite{Cha07I,gill07youtube}.

\subsubsection{Setup}
We implement TSCSF-B and LFG on \emph{MovieLens 20M Dataset}~\cite{harper2016movielens}, which includes $20$ million ratings to $27,000$ movies by $138,000$ users. This dataset contains both users’ movie ratings between $1$ and $5$ and genre categories for each movie. In order to compare the proposed TSCSF-B algorithm to the LFG algorithm, we select $N=5$ movies with different genres as the ground set of arms $\mathcal{N}$, which are Toy Story (1995) in the genre of Adventure, Braveheart (1995) in Action, Pulp Fiction (1994) in Comedy, Godfather, The (1972) in Crime,  and Alien (1979) in Horror.

Then, we count the total number of ratings on the selected $5$ genres and calculate occurrence of each selected genre among the $5$ genres as the availability of the corresponding selected movie. We note that the availability of the selected movies is only used by the OPT-F algorithm and is not used to determine the available set of movies in each round. During the simulation, when each user comes, the available set of movies is determined by whether the user has rated or not these movies in the dataset.

The ratings are scaled into $[0,1]$ to satisfy as the rewards. We choose $28,356$ users who have rated at least one of the selected $5$ movies as the number of rounds~(one round one user according to the first assumption) for the algorithms, and take their ratings as the rewards to the recommended movies.
When each user comes, the system will select no more than $m=2$ movies for recommendation and each movie shares the same weight, i.e., $w_i = 1, \forall i \in \mathcal{N}$, and the same fairness constraints. The fairness constraints are set as $\mathbf{k} = [0.3,0.3,0.3,0.3,0.3]$ such that (\ref{eq:question}) has a feasible solution.

We adopt such an implementation, including the determination of the available movie set, the same movie weights, and the same fairness constraints, to ensure that our simulation brings noise as little as possible to the MovieLens dataset.

\subsubsection{Results}
We first show whether the considered algorithms are able to achieve accurate ratings. The final ratings of selected movies by TSCSF-B and LFG under $\eta = \sqrt{\frac{NT}{m\ln T}}$ and  $\infty$ are shown in Fig.~\ref{fig:Movieratings}. The reason why we set $\eta = O(\sqrt{\frac{NT}{m\ln T}})$ is due to Corollary~\ref{cr:1}.
We can observe that the performance of TSCSF-B is better than that of LFG, since the ratings of TSCSF-B are much closer to the true average ratings, while the ratings acquired by UCB are higher than the true average ratings.

The final satisfaction for the fairness constraints of the selected movies is shown in Fig.~\ref{fig:fractionfinal}. Both TSCSF-B and LFG can satisfy the fairness constraints of the five movies under $\eta=\sqrt{\frac{NT}{m\ln T}}$.

\begin{figure}[h]
\subfloat [] {\label{fig:Movieratings}
\epsfig{file= 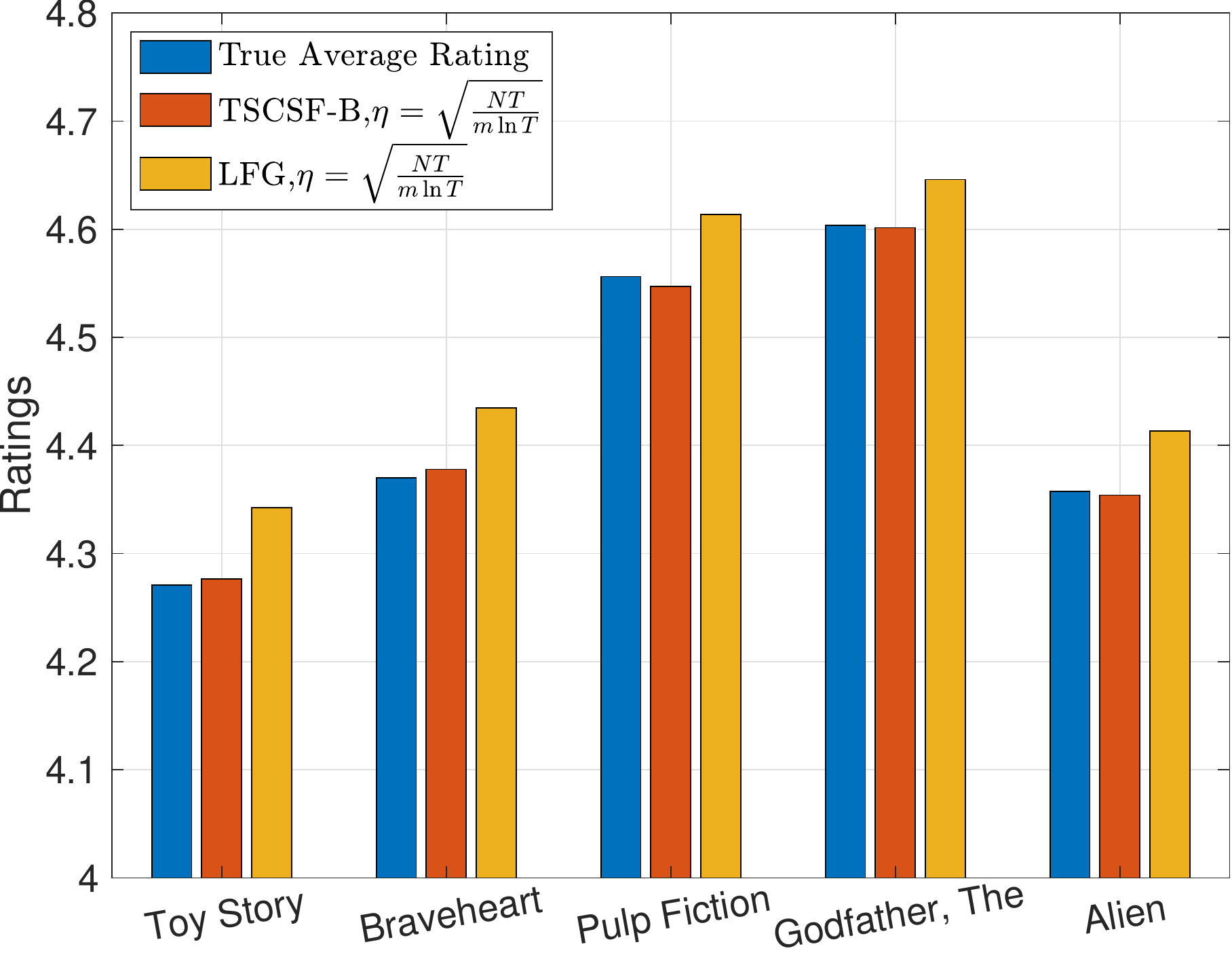, width = 0.48\columnwidth}
}
\subfloat [] {\label{fig:fractionfinal}
\epsfig{file= 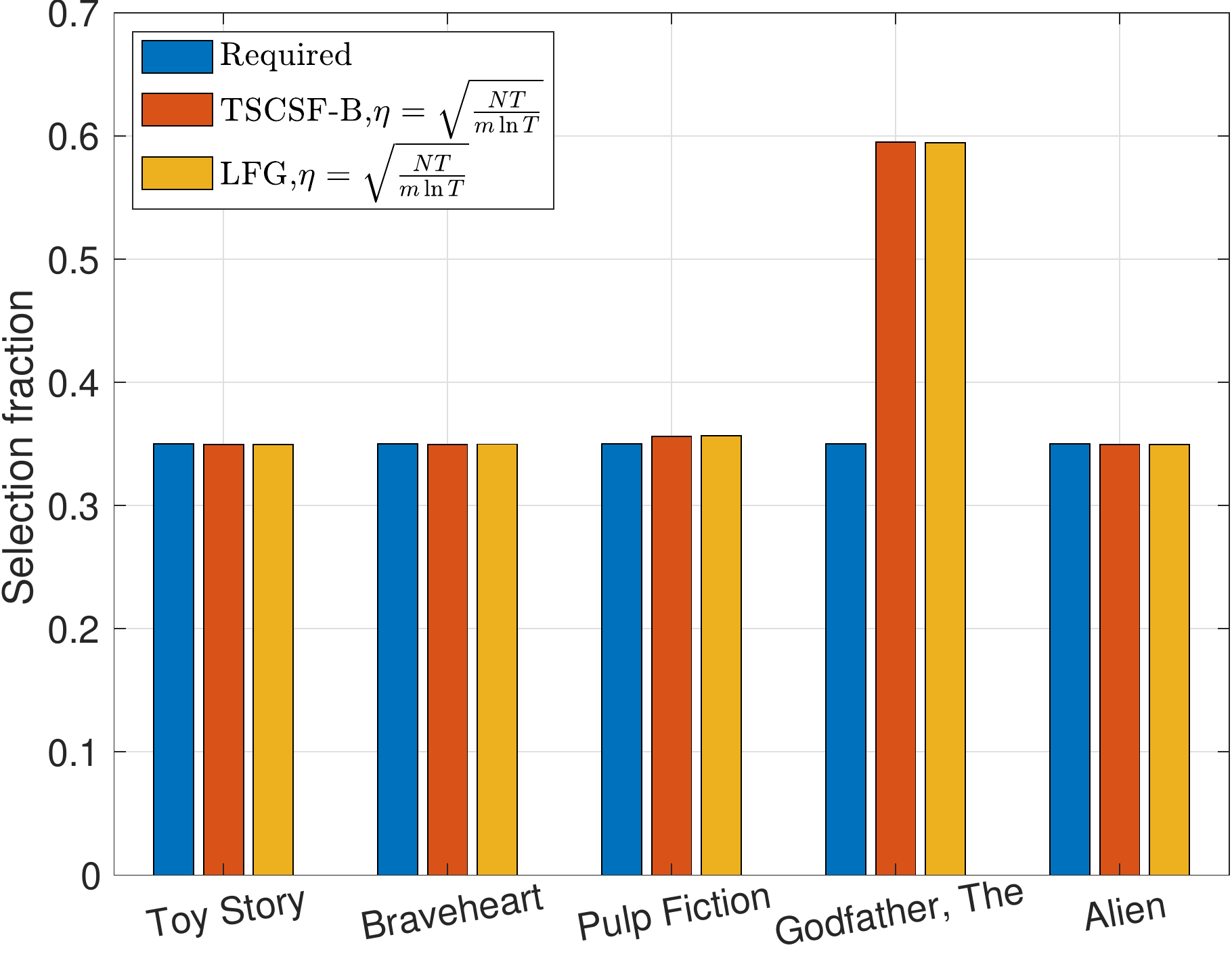, width = 0.48\columnwidth}
}
\caption{(a) The final ratings of selected movies. (b) The final satisfaction for the fairness constraints of selected movies.}
\vspace{-13pt}
\end{figure}

On the other hand, the time-averaged regret is shown in Fig.~\ref{sec:movieregret}. We can see that the time-averaged regret of TSCSF-B is below that of LFG, which indicates the proposed TSCSF-B algorithm converges much faster. Since we are unable to obtain the true distribution of the available movie set~(as discussed in \emph{Setup}), the rewards achieved by the OPT-F algorithm may not be the optimal one, which explains why the lines of both TSCSF-B and LFG are below that of OPT-F in Fig.~\ref{sec:movieregret}.

Generally,  TSCSF-B performs much better than LFG in this application, which achieves better final ratings and a quicker convergence speed.

\begin{figure}[h]
    \centering
    \includegraphics[width = 0.8\columnwidth]{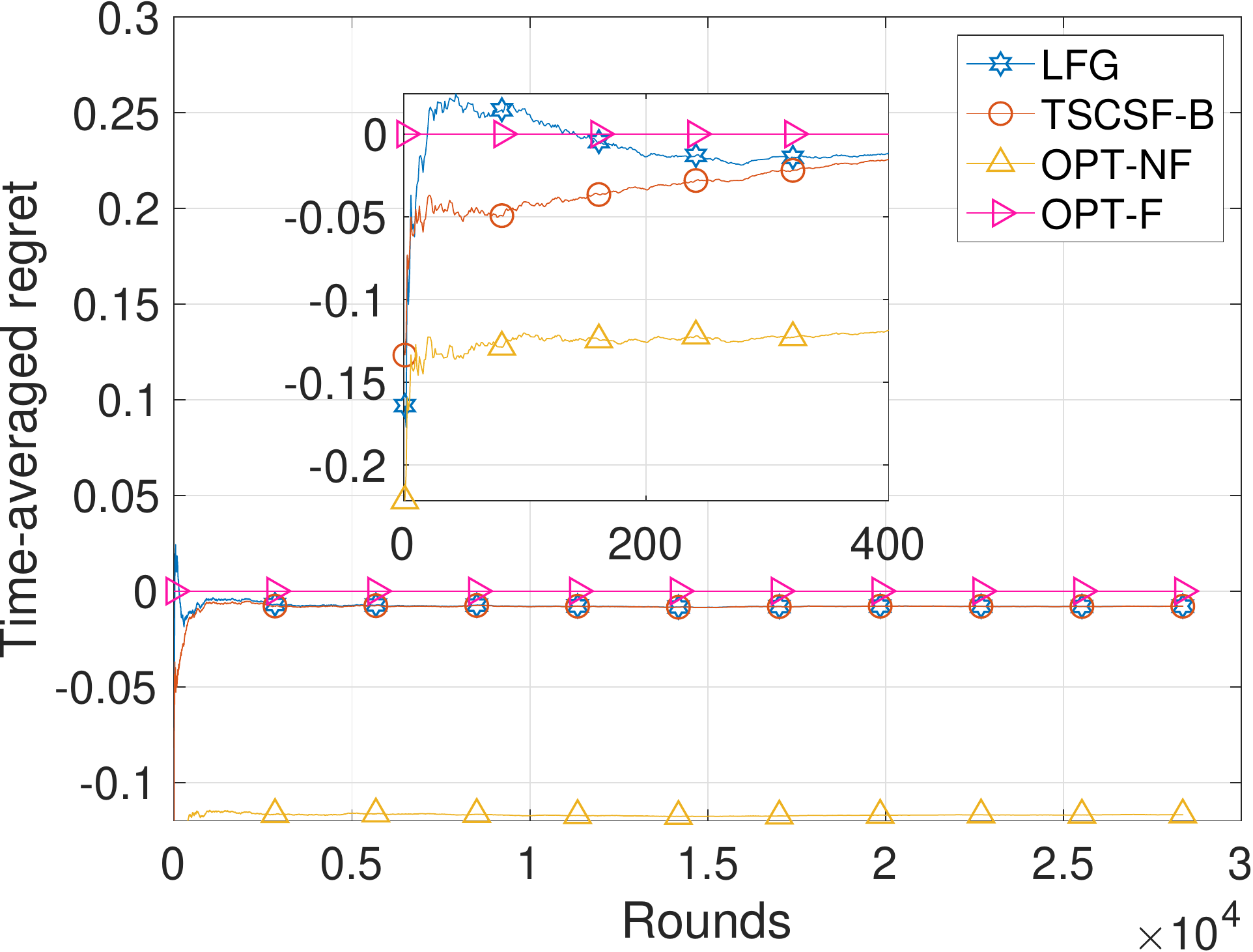}
    \caption{Time-averaged regret bounds for the high-rating movie recommendation system.}
    \label{sec:movieregret}
\end{figure}




\section{Conclusion}\label{sec:conclusion}
In this paper, we studied the stochastic combinatorial sleeping multi-armed bandit problem with fairness constraints, and designed the TSCSF-B algorithm with a provable problem-independent bound of $\widetilde{O}\left(\frac{\sqrt{mNT}}{T}\right)$ when $T \geq N$. Both the numerical experiments and real-world applications were conducted to verify the performance of the proposed algorithms.

As part of the future work, we would like to derive more rigorous relationship between $\eta$ and $T$ such that the algorithm can always satisfy the fairness constraints and achieves high rewards given any $T$, as well as tighter bounds.
\bibliographystyle{named}
\bibliography{ijcai20}
\appendix
\section{Appendix}\label{sec:appendix}
\subsection{Notations and Facts}
Recall that $h_i(t)$ is the number of times that arm $i$ has been pulled at the beginning of round $t$. Recall $\hat{u}_{i}(t) := \frac{\alpha_{i}(t)-1}{h_{i}(t)}=\frac{1}{h_{i}(t)} \sum\limits_{\tau: \tau<t, i \in A(\tau)} X_{i}(t)$ is the empirical mean of arm $i$ at the beginning of round $t$.
Therefore, we have $\alpha_i(t) -1  = \hat{u}_{i}(t) h_i(t) = \hat{u}_{i}(t) (\alpha_{i}(t) + \beta_i(t)-2)$.

For each arm $i\in \mathcal{N}$, we have two events $\mathcal{J}_i(t)$ and $\mathcal{K}_i(t)$ defined as follows:
\begin{equation*}
\begin{aligned}
   &\mathcal{J}_i(t) := \{\theta_i(t)  - u_i > 2\gamma_i(t) \},\\
   &\mathcal{K}_i(t) := \{u_i - \theta_i(t)> 2\gamma_i(t) \}.
\end{aligned}
\end{equation*}
where $\gamma_i(t) := \sqrt{\frac{\ln T}{h_i(t)}}$.

Define $\mathcal{F}_t$ as the history of the plays until time $t$, i.e., $\mathcal{F}_{t}=\left\{i(\tau), r_{i(\tau)}(\tau), \tau=1, \ldots, t\right\}$, where $i(\tau)$ is the arm pulled in round $\tau$.

Recall that the arms pulled by the deterministic oracle in round $t$ are $A^{\prime}(t)$, which is defined by
\begin{equation*}
    A^{\prime}(t) \in \underset{A \subseteq Z(t), |A| \leq m}{\operatorname{argmax}} \sum_{i \in A}\left(\frac{1}{\eta} Q_{i}(t) +  w_{i} u_{i}\right).
\end{equation*}
Let $d_{i}(t):=\mathbf{1}[i=i(t)]$ indicate whether arm $i$ is played by TSCSF-B in round $t$. In the same way, let  $d^\prime_{i}(t)$ indicate whether arm $i$ is played by $A^{\prime}(t)$  in round $t$.

{\bf \noindent  Fact 1  (Chernoff bound)} Let $X_1, \ldots, X_n$ be independent $0$-$1$ random variables such that $\mathbb{E}\left[X_{i}\right]=p_{i}$. Let $X=\frac{1}{n} \sum_{i} X_{i}, \mu=\mathbb{E}[X]=\frac{1}{n} \sum_{i=1}^{n} p_{i}$. Then, for any $0<\lambda<1-\mu$,
\begin{equation*}
    \operatorname{Pr}(X \geq \mu+\lambda) \leq \exp \{-n d(\mu+\lambda, \mu)\},
\end{equation*}
and for any $0<\lambda<\mu$,
\begin{equation*}
    \operatorname{Pr}(X \leq \mu-\lambda) \leq \exp \{-n d(\mu-\lambda, \mu)\},
\end{equation*}
where $d(a, b)=a \ln \frac{a}{b}+(1-a) \ln \frac{(1-a)}{(1-b)}$.

{\bf \noindent  Fact 2  (Hoeffding inequality).} Let $X_1, \ldots, X_n$ be random variables with common range $[0,1]$ and such that $\mathbb{E}\left[X_{t} | X_{1}, \ldots, X_{t-1}\right]=\mu$. Let $S_{n}=X_{1}+\cdots+X_{n}$. Then, for all $a>0$,
\begin{equation*}
    \operatorname{Pr}\left(S_{n} \geq n \mu+a\right) \leq e^{-2 a^{2} / n},
\end{equation*}
\begin{equation*}
    \operatorname{Pr}\left(S_{n} \leq n \mu-a\right) \leq e^{-2 a^{2} / n}.
\end{equation*}

{\bf \noindent  Fact 3  (Relationship between beta and Binomial distributions).} Let $\mathbf{F}_{\alpha, \beta}^{b e t a}(\cdot)$ be the cdf of beta distribution with parameters $\alpha$ and $\beta$, and let $\mathbf{F}_{n, p}^{B}(\cdot)$ be the cdf of Binomial distribution with parameters $n$ and $p$. Then, we have
\begin{equation*}
    \mathbf{F}_{\alpha, \beta}^{b e t a}(y)=1-\mathbf{F}_{\alpha+\beta-1, y}^{B}(\alpha-1)
\end{equation*}
for all positive integers $\alpha$ and $\beta$.

\subsection{Proof of Theorem \ref{thm:regret}}\label{sec:thm52}
\begin{proof}
To prove Theorem \ref{thm:regret}, we first introduce the following lemmas with their proofs in Sec.~\ref{sec:pfoflemmas}.

\begin{lemma}\label{lm:main}
The time-averaged regret of TSCSF-B can be upper bounded by
\begin{equation}\label{eq:astar1}
\resizebox{.9\hsize}{!}{$
\frac{N}{2 \eta}+\frac{1}{T} \left(\underbrace{\sum_{t=0}^{T-1} \mathbb{E}\left[\sum_{i \in A(t)} w_{i}\left(\theta_{i}(t)-u_{i}\right)\right] + \sum_{t=0}^{T-1}\mathbb{E}\left[\sum_{i \in A^{\prime}(t)} w_{i}\left(u_{i}-\theta_{i}(t)\right)\right]}_{C}\right)$,}
\end{equation}
\end{lemma}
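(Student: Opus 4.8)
The plan is to begin from the mean-reward form of the regret and insert the deterministic oracle $A'(t)$ as an intermediate benchmark. Since $A(t)$, $A'(t)$ and $A^*(t)$ are all selected \emph{before} the round-$t$ rewards $X_i(t)$ are revealed, each action is independent of the current reward given the history, so taking expectation over $X_i(t)$ replaces every realized reward by its mean and gives
\begin{equation*}
T\cdot\mathcal{R}(T) = \sum_{t=0}^{T-1}\mathbb{E}\left[\sum_{i}\left(\mathbf{1}[i\in A^*(t)]-\mathbf{1}[i\in A(t)]\right)w_i u_i\right].
\end{equation*}
I would then telescope through the oracle by writing the summand as an ``optimal-versus-oracle'' piece plus an ``oracle-versus-algorithm'' piece, i.e. inserting $\pm\,\mathbf{1}[i\in A'(t)]\,w_i u_i$.

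For the optimal-versus-oracle piece I would invoke that $A'(t)$ maximizes $\sum_{i\in A}(\tfrac{1}{\eta}Q_i(t)+w_i u_i)$ over all feasible $A\subseteq Z(t),|A|\le m$, and that $A^*(t)$ is itself feasible; this yields $\sum_i(\mathbf{1}[i\in A^*(t)]-\mathbf{1}[i\in A'(t)])w_i u_i\le \tfrac{1}{\eta}\sum_i(\mathbf{1}[i\in A'(t)]-\mathbf{1}[i\in A^*(t)])Q_i(t)$, which is exactly the ``oracle beats the optimal policy by sacrificing fairness'' statement from the sketch. For the oracle-versus-algorithm piece I would inject the samples $\theta_i(t)$, regrouping into $\sum_i\mathbf{1}[i\in A'(t)]w_i(u_i-\theta_i(t))$, a sampled-objective gap $\sum_i(\mathbf{1}[i\in A'(t)]-\mathbf{1}[i\in A(t)])w_i\theta_i(t)$, and $\sum_i\mathbf{1}[i\in A(t)]w_i(\theta_i(t)-u_i)$. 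Since $A(t)$ maximizes $\sum_{i\in A}(\tfrac{1}{\eta}Q_i(t)+w_i\theta_i(t))$, the middle gap is at most $\tfrac{1}{\eta}\sum_i(\mathbf{1}[i\in A(t)]-\mathbf{1}[i\in A'(t)])Q_i(t)$. The crucial bookkeeping is that the two queue contributions telescope: the $A'(t)$ queue terms cancel and only $\tfrac{1}{\eta}\sum_i(\mathbf{1}[i\in A(t)]-\mathbf{1}[i\in A^*(t)])Q_i(t)$ survives, leaving precisely the two sample-difference sums of $C$ plus this residual queue sum.

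The main obstacle, as I see it, is controlling the accumulated queue sum, i.e. proving $\tfrac{1}{\eta}\sum_{t}\mathbb{E}[\sum_i(\mathbf{1}[i\in A(t)]-\mathbf{1}[i\in A^*(t)])Q_i(t)]\le \tfrac{NT}{2\eta}$; this is where Lyapunov-drift analysis is needed. I would take $L(t)=\tfrac{1}{2}\sum_i Q_i(t)^2$ and use the recursion $Q_i(t+1)\le\max\{Q_i(t)+k_i-\mathbf{1}[i\in A(t)],0\}$ (which follows from the cumulative definition~(\ref{eq:updateQ})) together with $(k_i-\mathbf{1}[i\in A(t)])^2\le 1$ to get the one-step drift bound $L(t+1)-L(t)\le \tfrac{N}{2}+\sum_i Q_i(t)(k_i-\mathbf{1}[i\in A(t)])$. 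Summing over $t$ and using $L(0)=0$, $L(T)\ge 0$ upper-bounds $\mathbb{E}[\sum_t\sum_i\mathbf{1}[i\in A(t)]Q_i(t)]$ by $\tfrac{NT}{2}+\mathbb{E}[\sum_t\sum_i k_i Q_i(t)]$. On the other side, feasibility of $\mathbf{k}$ makes the stationary optimal policy satisfy $\mathbb{E}[\mathbf{1}[i\in A^*(t)]\mid\mathcal{F}_t]\ge k_i$, and since $Q_i(t)$ is $\mathcal{F}_t$-measurable and nonnegative this gives $\mathbb{E}[\sum_t\sum_i\mathbf{1}[i\in A^*(t)]Q_i(t)]\ge \mathbb{E}[\sum_t\sum_i k_i Q_i(t)]$.

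Subtracting these two bounds makes the $k_iQ_i(t)$ terms cancel and delivers the target $\tfrac{NT}{2}$ for the residual queue sum. Assembling everything and dividing by $T$ then yields exactly the claimed bound: the $\tfrac{N}{2\eta}$ term originates from the drift constant $\tfrac{N}{2}$ scaled by $\tfrac{1}{\eta}$, while the bracketed quantity is precisely $C$, the sum of $\sum_t\mathbb{E}[\sum_{i\in A(t)}w_i(\theta_i(t)-u_i)]$ and $\sum_t\mathbb{E}[\sum_{i\in A'(t)}w_i(u_i-\theta_i(t))]$. I expect the delicate points to be the measurability/conditioning justification for $\mathbb{E}[\mathbf{1}[i\in A^*(t)]\mid\mathcal{F}_t]\ge k_i$ and the sign tracking in the telescoping of the queue terms, rather than any heavy computation.
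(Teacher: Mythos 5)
Your proposal is correct and is essentially the paper's own proof: it uses the same deterministic oracle $A'(t)$, the same two argmax comparisons (optimality of $A'(t)$ under the true means and of $A(t)$ under the samples $\theta_i(t)$, which make the oracle's queue terms cancel), the same Lyapunov function $L(\mathbf{Q}(t))=\tfrac{1}{2}\sum_i Q_i^2(t)$ with drift constant $\tfrac{N}{2}$ from $(k_i-d_i(t))^2\le 1$, and the same feasibility step $\mathbb{E}[d_i^*(t)]\ge k_i$ (justified exactly as you say, by independence of the stationary randomized optimal policy from the queue history) to cancel the $k_iQ_i(t)$ terms. The only difference is bookkeeping order---the paper bounds a combined ``drift-plus-regret'' quantity per round and then bounds $C(t)$, whereas you decompose the regret first and defer the drift argument to the residual queue sum---and both orderings produce the identical bound.
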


\begin{lemma}\label{lm:ft}
For all $i \in \mathcal{N}, t \leq T$, the probability that event ${\mathcal{J}_i(t)}$ happens is upper bounded as follows:
\begin{equation*}
    \Pr\left(\mathcal{J}_i(t)\right) \leq \frac{1}{T^2} + \frac{1}{T^{32}},
\end{equation*}
\end{lemma}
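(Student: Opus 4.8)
The plan is to first fix the realization of the play count $h_i(t)$ and bound $\Pr(\mathcal{J}_i(t))$ conditionally, then remove the conditioning by averaging, since a bound that holds for every value of $h_i(t)$ also holds unconditionally. If $h_i(t)=0$ the arm has never been pulled, so $\gamma_i(t)=\infty$ and $\mathcal{J}_i(t)$ cannot occur; hence I may assume $h_i(t)=s\ge 1$. Conditioned on $h_i(t)=s$, the threshold $\gamma_i(t)=\sqrt{(\ln T)/s}$ is deterministic, and by the notation recalled above the sample obeys $\theta_i(t)\sim\mathrm{Beta}(\alpha_i(t),\beta_i(t))$ with $\alpha_i(t)-1=\hat{u}_i(t)\,s$ and $\alpha_i(t)+\beta_i(t)-1=s+1$.

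First I would separate the two sources of an over-large sample by inserting the empirical mean as an intermediate level:
\begin{equation*}
\Pr(\mathcal{J}_i(t)) \le \Pr\!\big(\hat{u}_i(t)-u_i>\gamma_i(t)\big) + \Pr\!\big(\theta_i(t)-u_i>2\gamma_i(t),\ \hat{u}_i(t)-u_i\le\gamma_i(t)\big).
\end{equation*}
The first term depends only on the average of $s$ i.i.d.\ Bernoulli$(u_i)$ rewards, so Hoeffding's inequality (Fact~2), or equivalently the Chernoff bound (Fact~1), gives $\exp(-2s\gamma_i(t)^2)=\exp(-2\ln T)=T^{-2}$; this is exactly the first term of the claimed bound.

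The second term is the genuinely new part, and it is where the Beta--Binomial identity of Fact~3 does the work: it rewrites the Beta upper tail as a Binomial lower tail, $\Pr(\theta_i(t)>y)=\Pr(\mathrm{Bin}(s+1,y)\le\hat{u}_i(t)\,s)$. On the conditioning event $\hat{u}_i(t)\le u_i+\gamma_i(t)$, evaluating this at $y=u_i+2\gamma_i(t)$ makes the Binomial mean $(s+1)y$ exceed the threshold $\hat{u}_i(t)\,s$ by a margin of at least $s\gamma_i(t)$, so a single application of the Chernoff bound (Fact~1), lower bounding the KL-divergence $d(\cdot,\cdot)$ by a quadratic in the gap, controls this probability by a quantity polynomially small in $T$; a careful accounting of the resulting exponent (a multiple of $s\gamma_i(t)^2=\ln T$) produces the stated $T^{-32}$. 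Averaging the two pieces over the law of $h_i(t)$, with the $s=0$ atom contributing $0$, then removes the conditioning and yields $\Pr(\mathcal{J}_i(t))\le T^{-2}+T^{-32}$. The main obstacle is precisely this last step: there is no off-the-shelf concentration inequality for a Beta sample around the true mean in this combinatorial/sleeping setting, so the argument hinges on converting to a Binomial tail through Fact~3 and then tracking the Chernoff exponent tightly enough to land the advertised power of $T$.
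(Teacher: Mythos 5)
Your overall architecture (split on the empirical mean, Hoeffding for one piece, Fact~3 plus a Chernoff bound for the other) is the right family of argument, and your first term is handled correctly: with margin $\gamma_i(t)$, Hoeffding gives exactly $e^{-2h_i(t)\gamma_i^2(t)}=T^{-2}$. The genuine gap is your second term: you claim the Beta--Binomial piece can be driven down to $T^{-32}$ by "careful accounting," but with your split it cannot. On your conditioning event the threshold $u_i+2\gamma_i(t)$ exceeds $\hat u_i(t)$ by only $\gamma_i(t)$, so the Binomial lower-tail margin is about $(s+1)\gamma_i(t)$ out of $n=s+1$ trials, i.e.\ $\lambda\approx\gamma_i(t)$ in terms of the average. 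Any version of the Chernoff/Hoeffding bound then yields an exponent of at most a small constant times $s\gamma_i^2(t)=\ln T$; for instance Pinsker's inequality $d(a,b)\ge 2(a-b)^2$ gives $e^{-2(s+1)\gamma_i^2(t)}\approx T^{-2}$, and this is essentially tight when $\hat u_i(t)$ is near $1/2$. The exponent is quadratic in the margin, so reaching $T^{-32}$ would require a margin of $4\gamma_i(t)$ for this piece --- but the event $\mathcal{J}_i(t)$ only supplies a total deviation of $2\gamma_i(t)$, and you have already spent $\gamma_i(t)$ of it on the empirical mean. What your argument actually proves is $\Pr(\mathcal{J}_i(t))\le 2/T^2$, which is weaker than the lemma as stated (though still of the right order, and enough for Theorem~2 at the cost of a slightly larger additive constant).

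For comparison, the paper allocates the budget the other way around: it conditions on $\mathcal{A}_i(t):=\{\hat u_i(t)-u_i\le 4\gamma_i(t)\}$, so the Hoeffding piece carries the large margin and produces $e^{-2(4\gamma_i(t))^2 h_i(t)}=T^{-32}$, while the Beta--Binomial piece is the one assigned $T^{-2}$; this reversed allocation is the only way to obtain the specific form $T^{-2}+T^{-32}$. Be aware, however, that the paper's own execution of its Beta--Binomial step is problematic: after conditioning on $\mathcal{A}_i(t)$, the surviving threshold is $\hat u_i(t)-2\gamma_i(t)$, which lies \emph{below} the posterior mean, and the paper then applies Fact~1 to a CDF evaluated above the Binomial mean as if it were an upper tail --- a direction error that squaring conceals. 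Your split (both margins equal to $\gamma_i(t)$) is in fact the clean, correct way to make the argument rigorous; the price is that both pieces come out at $T^{-2}$, so the lemma's advertised $T^{-32}$ term is not recoverable by your route, and you should state and propagate the bound $2/T^2$ instead of asserting the paper's constants.
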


\begin{lemma}\label{lm:kt}
For all $i \in \mathcal{N}$, $t \leq T$, the probability that event ${\mathcal{K}_i(t)}$ happens is upper bounded as follows:
\begin{equation*}
    \Pr\left(\mathcal{K}_i(t)\right) \leq \frac{1}{T^8} + \frac{1}{T^{32}}.
\end{equation*}
\end{lemma}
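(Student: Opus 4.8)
The plan is to fix a round $t$ and an arm $i$, condition on the history $\mathcal{F}_t$, and convert the lower-tail event $\mathcal{K}_i(t)=\{\,u_i-\theta_i(t)>2\gamma_i(t)\,\}$ into a Binomial tail via the Beta--Binomial identity (Fact~3). Conditioned on $\mathcal{F}_t$ the quantities $h_i(t)$, $\hat{u}_i(t)$, $\alpha_i(t)=\hat{u}_i(t)h_i(t)+1$ and $\beta_i(t)=(1-\hat{u}_i(t))h_i(t)+1$ are all fixed and $\theta_i(t)\sim\mathrm{Beta}(\alpha_i(t),\beta_i(t))$. Writing $y:=u_i-2\gamma_i(t)$ and using $\alpha_i(t)+\beta_i(t)-1=h_i(t)+1$, Fact~3 gives
\[
\Pr\!\left(\theta_i(t)\le y \mid \mathcal{F}_t\right)=\Pr\!\left(\mathrm{Bin}(h_i(t)+1,\,y)\ge \hat{u}_i(t)h_i(t)+1\right),
\]
so the lower tail of the Beta sample is exactly an upper tail of a Binomial whose mean $(h_i(t)+1)y$ sits about $2h_i(t)\gamma_i(t)$ below the threshold $\hat{u}_i(t)h_i(t)+1$ whenever $\hat{u}_i(t)\approx u_i$.

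Then I would pass to the reward randomness. Since $\hat{u}_i(t)h_i(t)$ is a sum of i.i.d.\ Bernoulli$(u_i)$ rewards, the averaged quantity is a comparison of two independent Binomials, $\Pr(W\ge S+1)$ with $W\sim\mathrm{Bin}(h_i(t)+1,u_i-2\gamma_i(t))$ and $S\sim\mathrm{Bin}(h_i(t),u_i)$, equivalently $\Pr(W+(h_i(t)-S)\ge h_i(t)+1)$, a single sum of about $2h_i(t)$ independent $[0,1]$ variables whose mean lies a distance $\approx 2h_i(t)\gamma_i(t)$ from the threshold. One application of the Chernoff bound (Fact~1) then controls the whole event: the crucial point is that the full gap $2\gamma_i(t)$ enters and, since $h_i(t)\gamma_i(t)^2=\Theta(\ln T)$, the exponent is a constant multiple of $\ln T$. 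Carrying the constant through --- the KL term obeys $d(u_i,u_i-2\gamma_i(t))\ge 2(2\gamma_i(t))^2$ by Pinsker's inequality, and multiplying by $h_i(t)+1$ gives an exponent of at least $8\ln T$ --- yields the dominant $T^{-8}$ term. This is precisely why the bound is smaller than the $T^{-2}$ of Lemma~\ref{lm:ft}: the lower tail is measured against $u_i-2\gamma_i(t)$ rather than against the empirical centre, so the effective deviation, hence the Chernoff exponent, is larger.

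To produce the second, smaller term I would isolate the regime in which the empirical mean itself misbehaves, splitting $\Pr(\mathcal{K}_i(t))$ according to whether $\hat{u}_i(t)$ stays within a generous band of $u_i$ or falls below it by a constant multiple of $\gamma_i(t)$. The atypical branch is bounded directly by the probability that an average of i.i.d.\ Bernoulli$(u_i)$ draws undershoots $u_i$ by that multiple; with the multiple chosen as in the statement, Hoeffding's inequality (Fact~2) turns $h_i(t)\gamma_i(t)^2=\Theta(\ln T)$ into the correction of order $T^{-32}$, which is negligible beside $T^{-8}$. Adding the two contributions gives the claimed bound, and the whole argument is the mirror image of the proof of Lemma~\ref{lm:ft} with every tail direction reversed.

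I expect the main obstacle to be purely quantitative: choosing the empirical-mean threshold and tracking the $+1$ shifts and the $h_i(t)$-versus-$h_i(t)+1$ discrepancies left by the Beta--Binomial identity so that the two exponents land exactly on $8\ln T$ and $32\ln T$. A closely related subtlety is the adaptivity of $h_i(t)$: the number of pulls is random and $\mathcal{F}_t$-measurable, so I cannot treat $\hat{u}_i(t)h_i(t)$ as a plain $\mathrm{Bin}(h_i(t),u_i)$ under a randomly stopped index. The clean fix is to work with the i.i.d.\ reward sequence of arm $i$, apply Fact~1 and Fact~2 for each fixed value of the pull count, and take a union bound over $h_i(t)\in\{1,\dots,T\}$; keeping this union-bound factor of $T$ straight is exactly what determines whether the surviving exponents are the stated ones.
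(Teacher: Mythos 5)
Your skeleton is the same as the paper's: condition on the history, turn the Beta lower tail into a Binomial upper tail via Fact~3, and split off the atypical event $\{u_i-\hat{u}_i(t)>4\gamma_i(t)\}$, which Fact~2 bounds by $T^{-32}$. The genuine gap is in the dominant term: the Chernoff exponent $8\ln T$ you claim is not attainable, and the obstruction is visible in your own (correct, and rather clean) reformulation $\Pr\left(W+(h_i(t)-S)\ge h_i(t)+1\right)$ with $W\sim\mathrm{Bin}(h_i(t)+1,\,u_i-2\gamma_i(t))$ and $S\sim\mathrm{Bin}(h_i(t),u_i)$ independent. The required deviation is $\approx 2\gamma_i(t)h_i(t)$, but it is measured against a sum of $2h_i(t)+1$ variables, so Fact~1 or Fact~2 gives an exponent of about $2\left(2\gamma_i(t)h_i(t)\right)^2/(2h_i(t)+1)\approx 4\gamma_i(t)^2h_i(t)=4\ln T$, i.e.\ $T^{-4}$, not $T^{-8}$. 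The quantity $(h_i(t)+1)\,d(u_i,u_i-2\gamma_i(t))\ge 8\ln T$ that you invoke would be the correct exponent only if the threshold were the deterministic $(h_i(t)+1)u_i$; the actual threshold $S+1$ is random, and the event can be realized by $S$ undershooting $u_ih_i(t)$ by $\gamma_i(t)h_i(t)$ while $W$ overshoots its mean by $\gamma_i(t)h_i(t)$, each costing only about $T^{-2}$. This product structure shows that $T^{-4}$ (up to polynomial factors) is the true order of the probability, e.g.\ for $u_i=1/2$, so no bookkeeping of the $+1$ shifts can recover $T^{-8}$. Your fallback split does not repair this either: with the $4\gamma_i(t)$ band needed for $T^{-32}$, the good branch allows $\hat{u}_i(t)$ as small as $u_i-4\gamma_i(t)$, in which case the threshold $\hat{u}_i(t)h_i(t)+1$ lies \emph{below} the mean of $W$ and no exponential bound holds on that branch at all; a band $c\gamma_i(t)$ with $c<2$ keeps a positive margin but inflates the atypical branch to $T^{-2c^2}$, and the best achievable total $T^{-2c^2}+T^{-2(2-c)^2}$ is about $2T^{-2}$, attained at $c=1$.

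For comparison, the paper's own proof follows the same two steps and fails at exactly the same point, only more bluntly: on its good branch it reduces to $\Pr\left(S_i>\hat{u}_i(t)h_i(t)\right)$ with $S_i\sim\mathrm{Bin}(h_i(t)+1,\,\hat{u}_i(t)+2\gamma_i(t))$, a probability that tends to one because the threshold sits below the mean, and then "applies" Fact~2 with the negative deviation $\delta_i=\hat{u}_i(t)h_i(t)-(\hat{u}_i(t)+2\gamma_i(t))(h_i(t)+1)<0$, which Hoeffding's inequality does not permit. So neither your argument nor the paper's establishes the stated $T^{-8}+T^{-32}$; what is salvageable is $O(T^{-2})$ by the split route (the mirror image of Lemma~\ref{lm:ft}, as you say) or $O(T^{-4})$ by your joint route, which is the cleaner of the two. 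Either weaker bound still suffices for Theorem~\ref{thm:regret}: Lemma~\ref{lm:kt} is used only to make $B_3$ of order $w_{\max}N$, and replacing $T^{-8}+T^{-32}$ by $2T^{-2}$ merely changes the constant $2.51$. Finally, your concern about the adaptivity of $h_i(t)$ is well placed --- the paper's device of conditioning on $\mathcal{F}_{t-1}$ makes $\hat{u}_i(t)$ deterministic, so its stated bound on $\Pr(\overline{\mathcal{G}_i(t)}\,|\,\mathcal{F}_{t-1})$ is vacuous as written --- and the union bound over the pull count that you propose is the standard repair, at the cost of one extra factor of $T$ in each term.
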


First, by Lemma \ref{lm:main} we have $R_{\mathrm{TSCSF-B}}(T)$ bounded by
(\ref{eq:astar1}).

\paragraph{Bound $C_1$}
Define event $\overline{\mathcal{J}_i(t)}$ is the complementary event of $\mathcal{J}_i(t)$ as follows:
\begin{equation*}
    \overline{\mathcal{J}_i(t)} := \{\theta_i(t) - u_i \leq 2\gamma_i(t) \}.
\end{equation*}
Then, we can decompose $C_1$ as
\begin{equation*}\label{eq:c1}
\begin{aligned}
    &\underbrace{\sum_{t=0}^{T-1} \mathbb{E}\left[\sum_{i=1}^{N} w_{i}\left(\theta_{i}(t)-u_{i}\right) d_{i}(t) \mathbf{1}[\mathcal{J}_i(t)] \right]}_{B_1} \\
    + &\underbrace{\sum_{t=0}^{T-1} \mathbb{E}\left[\sum_{i=1}^{N} w_{i}\left(\theta_{i}(t)-u_{i}\right) d_{i}(t) \mathbf{1}[\overline{\mathcal{J}_i(t)}] \right]}_{B_2}.
\end{aligned}
\end{equation*}
Since $\theta_{i}(t)-u_{i}\leq 1, d_{i}(t)\leq 1$, $B_1$ is therefore bounded by
\begin{equation*}\label{eq:b2}
\begin{aligned}
    \sum_{t=0}^{T-1} \mathbb{E}\left[\sum_{i=1}^{N} w_{i} \mathbf{1}[{\mathcal{J}_i(t)}] \right] \leq & w_{\max}\sum_{i=1}^{N} \sum_{t=0}^{T-1} \Pr\left({\mathcal{J}_i(t)}\right)\\
    \leq & w_{\max}\sum_{i=1}^{N} \sum_{t=0}^{T-1} \left(\frac{1}{T^2}+\frac{1}{T^{32}}\right)\\
    \leq & w_{\max} N \left(\frac{1}{T}+\frac{1}{T^{31}}\right),
\end{aligned}
\end{equation*}
where the second inequality is due to Lemma \ref{lm:ft}.

Next, we show how to bound $B_2$. Let $\tau_{i}(a)$ be the round when arm $i$ is played for the $a$-th time, i.e., $h_{i}(\tau_{i}(a)) = a-1$.
$B_2$ can be bounded as follows:
\begin{equation}\label{eq:B1}
\begin{aligned}
    &\sum_{t=0}^{T-1} \mathbb{E}\left[\sum_{i=1}^{N} w_{i}\left(\theta_{i}(t)-u_{i}\right) d_{i}(t) \mathbf{1}\left[\overline{\mathcal{J}_i(t)}\right] \right]\\
    =&\sum_{i=1}^{N} \mathbb{E}\left[\sum_{a=1}^{h_{i}(T-1)} \sum_{\tau_i(a)}^{\tau_{i}(a+1)-1} w_{i}\left(\theta_{i}(t)-u_{i}\right) d_{i}(t) \mathbf{1}\left[\overline{\mathcal{J}_i(t)}\right]\right]\\
    \leq& \sum_{i=1}^{N} \mathbb{E}\left[w_{1} + \sum_{a=2}^{h_{i}(T-1)} \sum_{\tau_i(a)}^{\tau_{i}(a+1)-1} w_{i}\left(\theta_{i}(t)-u_{i}\right) d_{i}(t) \mathbf{1}\left[\overline{\mathcal{J}_i(t)}\right]\right]\\
    \leq& w_{\max}\sum_{i=1}^{N} \mathbb{E}\left[1 + 2 \sum_{a=2}^{h_{i}(T-1)} \sum_{\tau_i(a)}^{\tau_{i}(a+1)-1} \gamma_i(t) d_{i}(t) \right]\\
    =& w_{\max}N + 2w_{\max}\sum_{i=1}^{N} \mathbb{E}\left[\sum_{a=2}^{h_{i}(T-1)} \gamma_i(\tau_i(a))\right].
\end{aligned}
\end{equation}
The last term in (\ref{eq:B1}) can be further written as follows:
\begin{equation*}\label{eq:2wb}
    \begin{aligned}
    & 2w_{\max}\sum_{i=1}^{N}  \mathbb{E}\left[ \sum_{a = 2}^{h_i(T-1)} \gamma_i(\tau_i(a))\right]\\
    =& 2w_{\max}\sum_{i=1}^{N} \mathbb{E}\left[ \sum_{a = 2}^{h_i(T-1)} \sqrt{\frac{\ln T}{a-1}}\right]\\
    =& 2w_{\max} \sqrt{\ln T} \sum_{i=1}^{N}  \mathbb{E}\left[ \sum_{a = 2}^{h_i(T-1)} \sqrt{\frac{1}{a-1}}\right] \\
    {\leq} & 2w_{\max} \sqrt{\ln T} \sum_{i=1}^{N}  \mathbb{E}\left[ 1 + \int_{1}^{h_i(T-1)} \sqrt{\frac{1}{x}} dx \right]\\
     {\leq}& 2w_{\max} \sqrt{\ln T} \sum_{i=1}^{N}  \mathbb{E}\left[ \sqrt{h_i(T-1)}\right]\\
     {\leq} & 2w_{\max} \sqrt{\ln T} \sum_{i=1}^{N}   \sqrt{\mathbb{E}\left[h_i(T-1)\right]},
    \end{aligned}
\end{equation*}
where the last inequality is due to Jensen's inequality. Also by Jensen's inequality, we have
\begin{equation*}
    \frac{1}{N}\sum\limits_{i=1}^{N}   \sqrt{\mathbb{E}\left[h_i(T-1)\right]} \leq \sqrt{\frac{1}{N} \sum\limits_{i=1}^{N}\mathbb{E}\left[h_i(T-1)\right]}.
\end{equation*}
Therefore, we can bound $\sum\limits_{i=1}^{N}   \sqrt{\mathbb{E}\left[h_i(T-1)\right]}$ by
\begin{equation*}\label{eq:lb1t}
\begin{aligned}
     \sqrt{N \sum\limits_{i=1}^{N}\mathbb{E}\left[h_i(T-1)\right]}
    \leq \sqrt{NTm},
\end{aligned}
\end{equation*}
where the inequality is due to the fact that at most $m$ arms are selected in each round.
Therefore, we have $B_2$ bounded by
\begin{equation*}\label{eq"b1bound}
    2w_{\max} \sqrt{mNT\ln T} + w_{\max}N.
\end{equation*}

Combining $B_1$ and $B_2$ gives
\begin{equation*}
    C_1 \leq 2w_{\max}\sqrt{mNT\ln T} +  w_{\max}N(1+\frac{1}{T}+\frac{1}{T^{31}}).
\end{equation*}

\paragraph{Bound $C_2$}
Define an event $\overline{\mathcal{K}_i(t)}$ as the complementary event of $\mathcal{K}_i(t)$:
\begin{equation*}
    \overline{\mathcal{K}_i(t)} := \{u_i - \theta_i(t) \leq 2\gamma_i(t)\}.
\end{equation*}
$C_2$ can be decomposed by
\begin{equation*}\label{eq:c2}
\begin{aligned}
    &\underbrace{\sum_{t=0}^{T-1} \mathbb{E}\left[\sum_{i=1}^{N} w_{i}\left(\theta_{i}(t)-u_{i}\right) d^\prime_{i}(t) \mathbf{1}\left[\mathcal{K}_i(t)\right]\right]}_{B_3} \\
    + &\underbrace{\sum_{t=0}^{T-1} \mathbb{E}\left[\sum_{i=1}^{N} w_{i}\left(\theta_{i}(t)-u_{i}\right) d^\prime_{i}(t) \mathbf{1}\left[\overline{\mathcal{K}_i(t)}\right] \right]}_{B_4}.
\end{aligned}
\end{equation*}

Let $\tau^\prime_{i}(a)$ be the round when arm $i$ is played for the $a$-th time by policy $A^\prime(t)$, i.e., $h^\prime_{i}(\tau^\prime_{i}(a)) = a - 1$. Since $\theta_{i}(t)-u_{i}\leq 1$, we can write $B_3$ as follows:
\begin{equation*}\label{eq:b4}
\begin{aligned}
    B_3 \leq& w_{\max} \sum_{i=1}^N \mathbb{E}\left[\sum_{t=1}^{T-1} d^\prime_{i}(t) \mathbf{1}\left[\overline{\mathcal{K}_i(t)}\right] \right] \\
    =& w_{\max} \sum_{i=1}^N \mathbb{E}\left[\sum_{a=0}^{h_i^\prime(T-1)} \mathbf{1}\left[\overline{\mathcal{K}_i(\tau^\prime_{i}(a))}\right] \right] \\
    \leq &  w_{\max} \sum_{i=1}^N \sum_{a=0}^{T-1} \Pr\left(\overline{\mathcal{K}_i(\tau^\prime_{i}(a))}\right)\\
    \stackrel{(\rm a)}{=} &  w_{\max} \sum_{i=1}^N \sum_{a=0}^{T-1} \left(\frac{1}{(\tau^\prime_{i}(a))^2} + \frac{1}{(\tau^\prime_{i}(a))^4} \right)\\
    \leq &  w_{\max} \sum_{i=1}^N \sum_{t=0}^{T-1} \left(\frac{1}{T^8} + \frac{1}{T^{32}} \right)\\
    \leq & w_{\max}N \left(\frac{1}{T^7} + \frac{1}{T^{31}} \right).
\end{aligned}
\end{equation*}
where $\rm{(a)}$ is due to Lemma \ref{lm:kt}.

$B_4$ can be bounded in a similar way as $B_2$:
\begin{equation*}\label{eq:b3}
\begin{aligned}
    B_4 \leq& w_{\max}N + 2w_{\max} \sum_{i=1}^{N} \mathbb{E}\left[\sum_{a=2}^{h^\prime_i(T-1)}\sqrt{\frac{ \ln T}{a-1}} \right]\\
    \leq & w_{\max}N  + 2w_{\max} \sqrt{\ln T} \sum_{i=1}^{N} \mathbb{E}\left[\sum_{a=2}^{h^\prime_i(T-1)}\sqrt{\frac{1}{a-1}} \right]\\
    \leq & 2w_{\max} \sqrt{mNT\ln T} + w_{\max}N .
\end{aligned}
\end{equation*}

Therefore, we have $C_2$ bounded by
\begin{equation*}\label{eq:c21}
    2w_{\max}\sqrt{mNT\ln T} + w_{\max}N \left(1+\frac{1}{T^7} + \frac{1}{T^{31}} \right).
\end{equation*}
Combining $C_1$ and $C_2$, when $T>1$, we have $C$ upper bounded by
\begin{equation}
    4w_{\max}\sqrt{mNT\ln T} + 2.51 w_{\max}N.
\end{equation}
\end{proof}

\subsection{Proof of Lemmas}\label{sec:pfoflemmas}

\subsubsection{Proof of Lemma \ref{lm:main}}

\begin{proof}
The proof for Lemma \ref{lm:main} is similar to Theorem 1 in [Li et al., 2019]. However, the arms selection (defined in Eq.~(\ref{eq:solution}) of our paper) is different from LFG~[Li et al., 2019] (we put $\eta$ together with the virtual queue). We first consider the Lyapunov drift function:
\begin{equation*}
    L(\mathbf{Q}(t)) := \frac{1}{2} \sum_{i=1}^{N} Q_{i}^{2}(t).
\end{equation*}
Recall the regret definition as follows:
\begin{equation*}
\begin{aligned}
    \mathcal{R}(T) &:= \mathbb{E}\left[\frac{1}{T} \sum_{t=0}^{T-1} {\left(\sum_{i \in A^*(t)} w_{i} X_{i}(t) - \sum_{i \in A(t)} w_{i} X_{i}(t)\right)}\right]\\
    & = \frac{1}{T} \sum_{t=0}^{T-1}\mathbb{E}\left[\sum_{i \in A^*(t)} w_{i} u_{i}(t) - \sum_{i \in A(t)} w_{i} u_{i}(t)\right]\\
    & = \frac{1}{T} \sum_{t=0}^{T-1}\mathbb{E}\left[\underbrace{\sum_{i = 1}^N w_{i} u_{i}(t) d^*_i(t)- \sum_{i = 1}^N w_{i} u_{i}(t) d_i(t)}_{\Delta r_i(t)} \right].,
\end{aligned}
\end{equation*}
where $d^*_i(t) = \mathbf{1}\left[i \in A^*_i(t) \right]$ and $d_i(t) = \mathbf{1}\left[i \in A_i(t) \right]$.

Then, the drift-plus-regret is given by
\begin{equation}
\begin{aligned}
    &L(\mathbf{Q}(t+1))-L(\mathbf{Q}(t))+\eta \Delta r_i(t) \\
    &=\frac{1}{2} \sum\limits_{i=1}^{N} Q_{i}^{2}(t+1)-\frac{1}{2} \sum\limits_{i=1}^{N} Q_{i}^{2}(t)+\eta \Delta r_i(t) \\
    &\stackrel{(\rm a)} {=}\frac{1}{2}\sum\limits_{i=1}^{N}\left(Q_{i}(t)+k_{i}-d_{i}(t)\right)^{2}-\frac{1}{2} \sum\limits_{i=1}^{N} Q_{i}^{2}(t)+\eta \Delta r_i(t) \\
    &=\frac{1}{2} \sum\limits_{i=1}^{N}\left(k_{i}-d_{i}(t)\right)^{2}+\sum\limits_{i=1}^{N}\left(k_{i}-d_{i}(t)\right) Q_{i}(t)+\eta \Delta r_i(t) \\
    &\stackrel{(\rm b)} {\leq} \frac{N}{2}+\sum\limits_{i=1}^{N} k_{i} Q_{i}(t)-\sum\limits_{i=1}^{N} d_{i}(t) Q_{i}(t)\\
    &+\eta \sum\limits_{i=1}^{N} w_{i} u_{i} d_{i}^{*}(t)-\eta \sum\limits_{i=1}^{N} w_{i} u_{i} d_{i}(t) \\
    &= \frac{N}{2}+\sum\limits_{i=1}^{N}\left(Q_{i}(t)+\eta w_{i} u_{i}\right)\left(d_{i}^{*}(t)-d_{i}(t)\right) \\
    &+\sum\limits_{i=1}^{N} Q_{i}(t)\left(k_{i}-d_{i}^{*}(t)\right),
\end{aligned}
\end{equation}
where $\rm{(a)}$ is due to the queue evolution equation~(defined in (5) of our paper), and $\rm{(b)}$ is due to facts that $k_{i}-d_{i}(t)\leq 1$ and $\Delta r_i(t) > 0$

We can bound the expected drift-plus-regret as
\begin{equation}\label{eq:dpr}
\begin{aligned}
     &\mathbb{E}[L(\mathbf{Q}(t+1))-L(\mathbf{Q}(t))+\eta \Delta r_i(t)]\\
     &\leq \frac{N}{2}+\sum\limits_{i=1}^{N} \mathbb{E}\left[\left(Q_{i}(t)+\eta w_{i} u_{i}\right)\left(d_{i}^{*}(t)-d_{i}(t)\right)\right]\\
     &+\sum\limits_{i=1}^{N} \mathbb{E}\left[Q_{i}(t)\left(k_{i}-d_{i}^{*}(t)\right)\right] \\
     &\leq \frac{N}{2}+\mathbb{E}[\sum\limits_{i=1}^{N}\left(Q_{i}(t)+\eta w_{i} u_{i}\right)\left(d_{i}^{*}(t)-d_{i}(t)\right)],
\end{aligned}
\end{equation}
where the last inequality is due to $\mathbb{E}\left[d_{i}^{*}(t)\right] \geq k_{i}$.

Summing (\ref{eq:dpr}) for all
$t\in\{0,\ldots,T-1\}$, and
dividing both sides of the inequality by $T\eta$, we have
\begin{equation*}
\begin{aligned}
    &\frac{1}{T \eta} \mathbb{E}[L(\mathbf{Q}(T))-L(\mathbf{Q}(0))]+\frac{1}{T} \sum\limits_{t=0}^{T-1} \mathbb{E}[\Delta r_i(t)] \\
    &\leq \frac{N}{2 \eta}+\frac{1}{T} \mathbb{E}[\underbrace{\sum\limits_{i=1}^{N}  \left(\frac{1}{\eta}Q_{i}(t)+ w_{i} u_{i}\right)\left(d_{i}^{*}(t)-d_{i}(t)\right)}_{C(t)}].
\end{aligned}
\end{equation*}

Since $L(Q(T)) \geq 0$ and $L(Q(0)) = 0$, we have
\begin{equation}\label{eq:regretbound}
    \mathcal{R}(T) = \frac{1}{T} \sum_{t=0}^{T-1} \mathbb{E}[\Delta r_i(t)] \leq \frac{N}{2 \eta}+\frac{1}{\eta T} \sum_{t=0}^{T-1} \mathbb{E}\left[C(t)\right].
\end{equation}

Recall that in each round $t$, the TSCSF-B algorithm chooses
arms $A(t)$ according to the follows:
\begin{equation*}
    A(t) \in \underset{A \subseteq {Z}(t), |A| \leq m}{\operatorname{argmax}} \sum_{i \in S}\left(\frac{1}{\eta}Q_{i}(t)+ w_{i} \theta_{i}(t)\right),
\end{equation*}
and the oracle (see the sketch proof for Theorem 2) chooses arms $A^\prime (t)$ as follows:
\begin{equation}\label{eq:1eeeeee}
    A^{\prime}(t) \in \underset{A \subseteq Z(t), |A| \leq m}{\operatorname{argmax}} \sum_{i \in A}\left(\frac{1}{\eta}Q_{i}(t)+w_{i} u_{i}(t)\right).
\end{equation}

Therefore, we have
\begin{equation}\label{eq:1eee}
    \sum_{i \in A(t)}\left(\frac{1}{\eta}Q_{i}(t)+ w_{i} \theta_{i}(t)\right) \geq \sum_{i \in A^{\prime}(t)}\left(\frac{1}{\eta} Q_{i}(t) + w_{i} \theta_{i}(t)\right).
\end{equation}

$C(t)$ can be bounded as follows:
\begin{equation}\label{eq:ct}
\resizebox{.9\hsize}{!}{$
    \begin{aligned}
    C(t)=& \sum_{i=1}^{N}\left(\frac{1}{\eta} Q_{i}(t)+ w_{i} u_{i}\right)\left(d_{i}^{*}(t)-d_{i}(t)\right) \\
    =& \sum_{i \in A^{*}(t)}\left(\frac{1}{\eta} Q_{i}(t)+ w_{i} u_{i}\right)-\sum_{i \in A(t)}\left(\frac{1}{\eta} Q_{i}(t)+w_{i} u_{i}\right) \\
     \stackrel{(\rm a)}{\leq} & \sum_{i \in A^{\prime}(t)}\left(\frac{1}{\eta}Q_{i}(t)+ w_{i} u_{i}\right)-\sum_{i \in A(t)}\left(\frac{1}{\eta}Q_{i}(t)+ w_{i} u_{i}\right) \\
     \stackrel{(\rm b)}\leq & \sum_{i \in A^{\prime}(t)}\left(\frac{1}{\eta}Q_{i}(t)+ w_{i} u_{i}\right)-\sum_{i \in A(t)}\left(\frac{1}{\eta}Q_{i}(t)+ w_{i} u_{i}\right) \\
     +&\sum_{i \in A(t)}\left(\frac{1}{\eta}Q_{i}(t)+ w_{i} \theta_{i}(t)\right) -\sum_{i \in A^{\prime}(t)}\left(\frac{1}{\eta}Q_{i}(t)+ w_{i} \theta_{i}(t)\right) \\
     =& {\sum_{i \in A(t)} w_{i}\left(\theta_{i}(t)-u_{i}\right)} + {\sum_{i \in A^{\prime}(t)} w_{i}\left(u_{i}-\theta_{i}(t)\right)},
     \end{aligned}$}
\end{equation}
where $\rm{(a)}$ is due to the definition of $A^\prime(t)$ in (\ref{eq:1eeeeee}) and $\rm{(b)}$ is due to (\ref{eq:1eee}).

Substituting (\ref{eq:ct}) into (\ref{eq:regretbound}) concludes the proof.
\end{proof}

\subsubsection{Proof of Lemma  \ref{lm:ft}}
\begin{proof}
Let event $\mathcal{A}_i(t) := \left\{\hat{u}_i(t) - u_i \leq 4\gamma_i(t)\right\}$. Then, we have
\begin{equation}\label{eq:prnf}
\begin{aligned}
   \Pr\left(\mathcal{J}_i(t)\right) =& \Pr\left({\mathcal{J}_i(t)}|\mathcal{A}_i(t) \right) \Pr(\mathcal{A}_i(t)) \\  + & \Pr\left({\mathcal{J}_i(t)}|\overline{\mathcal{A}_i(t)} \right) \Pr(\overline{\mathcal{A}_i(t)})\\
   \leq&\Pr\left({\mathcal{J}_i(t)}|\mathcal{A}_i(t)\right) + \Pr\left(\overline{\mathcal{A}_i(t)}\right).
\end{aligned}
\end{equation}
We can bound $\Pr\left({\mathcal{J}_i(t)}|\mathcal{A}_i(t)\right)$ as follows:
\begin{equation}\label{eq:pra1}
\begin{aligned}
    \Pr\left({\mathcal{J}_i(t)}|\mathcal{A}_i(t)\right) =& \Pr\left(\theta_i(t)>u_i+2\gamma_i(t)|\mathcal{A}_i(t)\right)\\
    \stackrel{(\rm a)}{\leq} & \Pr\left(\theta_i(t)>\hat{u}_i(t) - 2\gamma_i(t)\right)\\
    = & \mathbb{E}\left[{\Pr\left(\theta_i(t)>\hat{u}_i(t)  - 2\gamma_i(t) | \mathcal{F}_{t-1} \right)}\right],
\end{aligned}
\end{equation}
where $\rm{(a)}$ is due to the fact that $u_i + 2\gamma_i(t) \geq \hat{u}_i(t) - 2\gamma_i(t)$ conditioned on $\mathcal{A}_i(t)$ happens.

Since given $\mathcal{F}_{t-1}$, $\hat{u}_i(t)$ and $\gamma_i(t)$ are determined, we have
\begin{equation}\label{eq:pra3}
\begin{aligned}
     &\Pr\left(\theta_i(t)>\hat{u}_i(t) - 2\gamma_i(t) | \mathcal{F}_{t-1} \right)\\
    =& 1-\mathbf{F}_{\alpha_i(t),\beta_i(t)}^{beta}\left(\hat{u}_i(t) - 2\gamma_i(t)\right)  \\
    \stackrel{(\rm b)}{=}& \mathbf{F}_{\alpha_i(t)+\beta_i(t)-1,\hat{u}_i(t)-2\gamma_i(t)}^{B}\left(\alpha_i(t)-1\right), \\
\end{aligned}
\end{equation}
where $\rm{(b)}$ is due to Fact 3 for the relationship between beta and Binomial distributions. Since $\alpha_i(t)-1 = \hat{u}_i(t)(\alpha_i(t)+\beta_i(t)-2)<\hat{u}_i(t)(\alpha_i(t)+\beta_i(t)-1)$, (\ref{eq:pra3}) can be further bounded by

\begin{equation}\label{eq:pra2}
    \begin{aligned}
    &\mathbf{F}_{\alpha_i(t)+\beta_i(t)-1,\hat{u}_i(t)-2\gamma_i(t)}^{B}\left(\hat{u}_i(t)(\alpha_i(t)+\beta_i(t)-1)\right) \\
    \stackrel{(\rm a)}{\leq} & e^{-(\alpha_i(t)+\beta_i(t)-1)d(\hat{u}_i(t), \hat{u}_i(t)-2\gamma_i(t))}\\
    \stackrel{(\rm b)}{\leq} & e^{-h_i(t)\frac{|\hat{u}_i(t)-\hat{u}_i(t)+2\gamma_i(t)|^2}{2}}\\
    =& e^{-h_i(t){2\gamma^2_i(t)}}\\
    =& \frac{1}{T^2},
    \end{aligned}
\end{equation}
where $\rm{(a)}$ is due to Fact 1 by the Chernoff bound and $\rm{(b)}$ is due to the fact that $d(a,b)\geq \frac{|a-b|^2}{2}$.
Substituting (\ref{eq:pra2}) and (\ref{eq:pra3}) into (\ref{eq:pra1}), we have
\begin{equation}\label{eq:pra4}
\begin{aligned}
    \Pr\left({\mathcal{J}_i(t)}|\mathcal{A}_i(t)\right)\leq \frac{1}{T^2}.
\end{aligned}
\end{equation}

On the other hand, $\Pr\left(\overline{\mathcal{A}_i(t)}\right)$ can be written as
\begin{equation*}\label{eq:prna1}
    \Pr\left(\overline{\mathcal{A}_i(t)}\right) = \mathbb{E}\left[\Pr\left(\overline{\mathcal{A}_i(t)} | \mathcal{F}_{t-1}\right)\right].
\end{equation*}
Given $\mathcal{F}_{t-1}$,  $\hat{u}_i(t)$ and $\gamma_i(t)$ are determined. Then
by Fact 2, we have
\begin{equation*}\label{eq:prna}
    \begin{aligned}
        \Pr\left(\overline{\mathcal{A}_i(t)}| \mathcal{F}_{t-1}\right) =& \Pr(\hat{u}_i(t) - u_i > 4\gamma_i(t))\\
        \leq& e^{-32(\gamma_i(t))^2h_i(t)}=\frac{1}{T^{32}}.
    \end{aligned}
\end{equation*}
Therefore,  we have
\begin{equation}\label{eq:prna3}
    \Pr\left(\overline{\mathcal{A}_i(t)}\right) \leq \frac{1}{T^{32}}.
\end{equation}

Substituting (\ref{eq:prna3}), (\ref{eq:pra4}) into (\ref{eq:prnf}) concludes the proof.
\end{proof}

\subsubsection{Proof of Lemma \ref{lm:kt}}
\begin{proof}
Define event $\mathcal{G}_i(t)$ as
\begin{equation*}
    \mathcal{G}_i(t) := \{u_i-\hat{u}_i(t) \leq 4\gamma_i(t)\}.
\end{equation*}
We can decompose $\Pr\left({\mathcal{K}_i(t)}\right)$ as follows
\begin{equation}\label{eq:prnk}
\begin{aligned}
    &\Pr\left({\mathcal{K}_i(t)}|\mathcal{G}_i(t) \right) \Pr(\mathcal{G}_i(t)) + \Pr\left({\mathcal{K}_i(t)}|\overline{\mathcal{G}_i(t)} \right) \Pr(\overline{\mathcal{G}_i(t)})\\
   \leq&\Pr\left({\mathcal{K}_i(t)}|\mathcal{G}_i(t)\right) + \Pr\left(\overline{\mathcal{G}_i(t)}\right).
\end{aligned}
\end{equation}
For each arm $i$, since $u_i-2\gamma_i(t)\leq \hat{u}_i(t)+2{\gamma_i(t)}$ when $\mathcal{G}_i(t)$ happens, we have $\Pr\left({\mathcal{K}_i(t)}|\mathcal{G}_i(t)\right)$ bounded by
\begin{equation}\label{eq:pr1}
\begin{aligned}
     & \Pr\left(\theta_i(t)<\hat{u}_i+2\gamma_i(t)\right)=  \mathbb{E}\left[\Pr\left(\theta_i(t)<\hat{u}_i+2\gamma_i(t) | \mathcal{F}_{t-1}  \right) \right]
\end{aligned}
\end{equation}
Given $\mathcal{F}_{t-1}$, $\hat{u}_i(t)$ and $\gamma_i(t)$ are determined. Then, we can write $\Pr\left(\theta_i(t)<\hat{u}_i(t)+2\gamma_i(t) | \mathcal{F}_{t-1}\right)$ as
\begin{equation}\label{eq:pr11}
\begin{aligned}
     &\mathbf{F}_{\alpha_i(t), \beta_i(t)}^{beta}(\hat{u}_i(t)+2{\gamma_i(t)})\\
     = & 1 -  \mathbf{F}_{\alpha_i(t) + \beta_i(t)-1, \hat{u}_i(t)+2{\gamma_i(t)}}^{B}(\alpha_i(t)-1) \\
      \stackrel{(\rm b)} {=} & 1 -  \mathbf{F}_{h_i(t) + 1, \hat{u}_i(t)+2{\gamma_i(t)}}^{B}(\hat{u}_i(t) h_i(t)),
\end{aligned}
\end{equation}
where $\rm{(b)}$ is due to $h_i(t) = \alpha_i(t) + \beta_i(t)-2$ and $\alpha_i(t)-1 = \hat{u}_i(t)h_i(t)$.

Let $Y_{j}$ be an outcome of the $j$-th Bernoulli trial with mean value $\hat{u}_i(t)+2{\gamma_i(t)}$. $S_i := \sum\limits_{j=1}^{h_i(t)+1} Y_j$ is a random variable generated from Binomial distribution with parameters $h_i(t) + 1$~(number of trials) and $\hat{u}_i(t)+2{\gamma_i(t)}$~(mean value). Then, we have
\begin{equation}\label{eq:Fb1m}
\begin{aligned}
    F_{h_i(t) + 1, \hat{u}_i(t)+2{\gamma_i(t)}}^{B}(\hat{u}_i(t) h_i(t)) =& \Pr\left(S_i \leq \hat{u}_i(t) h_i(t)\right) \\
    = & 1 - \Pr\left(S_i > \hat{u}_i(t) h_i(t)\right).
\end{aligned}
\end{equation}
Substituting (\ref{eq:Fb1m}) into (\ref{eq:pr11}), we have
\begin{equation}\label{eq:fb2m}
    \mathbf{F}_{\alpha_i(t), \beta_i(t)}^{beta}(\hat{u}_i(t)+2{\gamma_i(t)})=\Pr\left(S_i > \hat{u}_i h_i(t)\right).
\end{equation}
We can rewrite $\Pr\left(S_i > \hat{u}_i h_i(t)\right)$ as
\begin{equation*}\label{eq:lema22}
\begin{aligned}
\Pr\left(S_i > n \mathbb{E}[Y_j]+\delta_i\right),
\end{aligned}
\end{equation*}
where $n=h_i(t)+1$, $\mathbb{E}[Y_j]=\hat{u}_i(t)+2{\gamma_i(t)}$, and $\delta_i = \hat{u}_i(t) h_i(t)-(\hat{u}_i(t)+2{\gamma_i(t)})(h_i(t)+1)$.

Therefore, according to the Hoeffding inequality (Fact 2), we have
\begin{equation}\label{eq:g1}
\begin{aligned}
    &\Pr\left(S_i > \hat{u}_i(t) h_i(t)\right)\\
    \leq& e^{-2(\delta_i)^2/n^2}\\
    =&e^{-2 \left(\hat{u}_i(t) h_i(t) - (\hat{u}_i(t)+2{\gamma_i(t)})(h_i(t)+1)\right)^2/(h_i(t)+1)}\\
    =& e^{-2(\hat{u}^2_i(t) +  4 \hat{u}_i(t) \gamma_i(t)  (h_i(t)+1) + {\gamma_i(t)^2}(h_i(t)+1)^2)/(h_i(t)+1)} \\
    \leq & e^{-2( 4 \hat{u}_i(t) \gamma_i(t) + 4 {\gamma_i(t)^2}(h_i(t)+1))}\\
    = & e^{-8 \hat{u}_i(t) \gamma_i(t) -8{\gamma_i(t)^2}(h_i(t)+1)}\\
    \leq & e^{ -8{\varepsilon^2_i(t)}h_i(t)}\\
    = & \frac{1}{T^8}.
\end{aligned}
\end{equation}

Substituting (\ref{eq:g1}) into (\ref{eq:fb2m}), we have
\begin{equation*}
    \Pr\left(\theta_i(t)<\hat{u}_i(t)-\gamma_i(t) | \mathcal{F}_{t-1}  \right) \leq \frac{1}{T^8}.
\end{equation*}
Therefore,
\begin{equation*}\label{eq:t22}
   \Pr\left({\mathcal{K}_i(t)}|\mathcal{G}_i(t)\right) \leq  \frac{1}{T^8}.
\end{equation*}
On the other hand, $\Pr \left( \overline{\mathcal{G}_i(t)} \right) = \mathbb{E}\left[\Pr \left( \overline{\mathcal{G}_i(t)} | \mathcal{F}_{t-1} \right) \right]$. Given $\mathcal{F}_{t-1}$, $\hat{u}_i(t)$ and $\epsilon_i(t)$ are determined, and $\Pr \left( \overline{\mathcal{G}_i(t)} | \mathcal{F}_{t-1} \right)$ can be bounded by the Hoeffding inequality (Fact 2) :
\begin{equation*}
\begin{aligned}
    \Pr \left( \overline{\mathcal{G}_i(t)}| \mathcal{F}_{t-1} \right)=& \Pr \left(\hat{u}_i(t) < u_i - 4{\gamma_i(t)} | \mathcal{F}_{t-1} \right)\\
    \leq & e^{-2 ({4\gamma_i(t)})^2 h_i(t)}\\
    =& \frac{1}{T^{32}}.
\end{aligned}
\end{equation*}
Therefore, $\Pr \left( \overline{\mathcal{G}_i(t)} \right)$ can be bounded by $\frac{1}{T^{32}}$.
\end{proof}

\end{document}